\title{Agenda Separability in Judgment Aggregation}
\author{J\'er\^ome Lang,  Marija Slavkovik,  and  Srdjan Vesic }
\newcommand{\La }{\mbox{$\mathcal{L}$} }       
\newcommand{\A}{\mathcal{A}}
\newcommand{\PA}{[\mathcal{A}]}
\newcommand{\SA}{\mathcal{A}}
\newcommand{\rest}[2]{{#1}_{\downarrow#2}}
\newcommand{\profiles}{\Pi_{\Pf_1, \Pf_2}}
\newcommand{\ai}{\varphi}
\newcommand{\Js}{J}
\newcommand{\Is}{I}
\newcommand{\Jstar}{\Js_{\ast}}
\newcommand{\Jcirc}{\Js_{\circ}}
\newcommand{\JdoubleStar}{\Js_{\ast\ast}}
\newcommand{\Istar}{\Is^{\ast}}
\newcommand{\Ct}{\Gamma}
\newcommand{\su}{\succsim_{\Pf}}
\newcommand{\Dmc}{\mathcal{J}}
\newcommand{\Pf}{P}
\newcommand{\Qf}{Q}
\newcommand{\Qstar}{\Qf^{\ast}}
\newcommand{\Alt}{C}
\newcommand{\m}{m}
\newcommand{\R}{R}
\newcommand{\F}{R}
\newcommand{\CMC}{\mathtt{CMC}}
\newcommand \argmax[1] {\underset{{#1}}{\mbox{argmax}}}
\newcommand \argmin[1] {\underset{{#1}}{\mbox{argmin}}}
\newcommand \ssum[1] {\underset{{#1}}{\mbox{$\sum$}}}
\newcommand \mmx[2] {\overset{#2}{\underset{{#1}}{\mbox{$\max$}}}}
\newcommand \ssm[2] {\overset{#2}{\underset{{#1}}{\mbox{$\sum$}}}}
\newcommand \ssu[1] {\succsim_{#1}}
 \newcommand{\npf}[2]{N({#1},{#2})}
\newcommand{\RMWA}{{{\textsc{med}}}}
\newcommand{\RMSA}{{{\textsc{mc}}}}
\newcommand{\RMNAC}{{\textsc{full$_H$}}}
\newcommand{\RMCSA}{{{\textsc{mcc}}}}
\newcommand{\RRA}{{{\textsc{ra}}}}
\newcommand{\RMAX}{{\textsc{R}^{d_H,\textsc{max}}}}
\newcommand{\RS}{\textsc{R}_{S}}
\newcommand{\REVS}{\textsc{R}_{rev}}
\newcommand{\mc}{{{\RMSA}}}
\newcommand{\mcc}{{{\RMCSA}}}
\newcommand{\med}{{{\RMWA}}}
\newcommand{\fullh}{\RMNAC}
\newcommand{\MCC}[1]{{ext(max(#1, |.|))}}
\newcommand{\UA}{{\mathtt{UA_{12}}}}
\newcommand{\Atoms}{{\mathtt{Atoms}}}
\newcommand{\ccb}{{\cellcolor{blue!25}}}
\newcommand{\ccy}{{\cellcolor{yellow!25}}}
\newcommand{\ccg}{{\cellcolor{gray!25}}}
\newcommand{\oas}{\textsc{oas}}
\newcommand{\as}{\textsc{as}}
\newcommand{\ra}{{\RRA}}       
\newcommand{\JA}{{\Js^1}}       
\newcommand{\JB}{{\Js^2}}
\newtheorem{definition}{Definition}
\newtheorem{proposition}{Proposition}
\newtheorem{example}{Example}
\newtheorem{corollary}{Corollary}
\newtheorem{obs}{Observation}
  \newenvironment{proof}{\noindent\emph{Proof.}}{\hfill $\Box$\break\par}
\begin{document}


\begin{abstract}
 One of the better studied properties for operators in judgment aggregation is {\em independence}, which essentially dictates that the collective judgment on one issue should not depend on the individual judgments given on some other issue(s) in the same agenda. Independence, 
 although considered 
a desirable property,
  is too strong, because together with mild additional conditions  it implies dictatorship.     We propose here a weakening of independence, named {\em agenda separability}: a judgment aggregation rule satisfies it if, whenever the agenda is composed of several independent sub-agendas,  the resulting collective judgment sets can be computed separately for each sub-agenda and then put together. We show that this property is discriminant, in the sense that among judgment aggregation rules so far studied in the literature, some satisfy it and some do not. 
We briefly discuss the implications of agenda separability on the computation of judgment aggregation rules.
\end{abstract}

\section{Introduction}

Judgment aggregation consists in finding collective judgments that are representative of a collection of individual judgments on  some logically interrelated issues.   Judgment aggregation problems originate in  political theory and public choice, however they also occur in various areas of artificial intelligence, as a  consequence of the increased distributivity of  computing systems and social networks, together with the rise of artificial agency. 
Judgment aggregation generalises voting and preference aggregation \cite{Dietrich07,ADT2013}, and has links  
with belief revision \cite{EveraereKM15,Pigozzi2006} as well as abstract argumentation \cite{CaminadaP11,Awad2015,Booth2015,BoothAR14}. For an overview of applications of judgment aggregation in artificial intelligence see for instance the work by Grossi and Pigozzi \cite{2014Grossi} or Endriss \cite{EndrissHBCOMSOC2015}.

The main focus of research in judgment aggregation is the development and analysis of judgement aggregation operators. Numerous impossibility results -- see the survey by List and Puppe \cite{ListPuppe2009} for an overview -- have dashed the hope of finding a universally applicable operator. Consequently, the suitability  of an operator for a given judgment aggregation problem has to be identified with respect to the desirable properties that the aggregation process should satisfy.

 One of the better studied properties for operators in judgment aggregation is the {\em independence} property, which essentially dictates that the collective judgment on any one issue in the agenda should not depend on the individual judgments given on any of the other issues in the same agenda. Independence is a desirable property because, among other reasons, it is a necessary condition for  strategyproofness \cite{DL05}, and it leads to rules that are both conceptually simple and easy to compute.
However,  independence is too strong; in particular, together with mild additional conditions, it implies dictatorship \cite{Dietrich07}. 

We propose  a  natural weakening of independence, named {\em agenda separability}. A judgment aggregation rule satisfies
it  if, whenever the agenda is composed of several independent sub-agendas (with an extreme form of independence being when the sub-agendas are syntactically unrelated to each other), the resulting collective judgment sets can be computed separately for each sub-agenda and then put together. Resorting to syntactically independent sub-languages is reminiscent of Parikh's language splitting \cite{Parikh99}, where decomposing a logical theory into several subtheories over disjoint sub-languages simplifies many tasks in knowledge representation, such as belief change \cite{peppas2007distance} or inconsistency handling \cite{chopra1999inconsistency}.  

The agenda separability property is very intuitive and motivations for it can be easily found. For instance, in computational linguistics, we may want to aggregate annotations from several agents about parts of texts \cite{KrugerEtAlAAMAS2014}; then, finding collective annotations about parts of two unrelated texts can (and should) be performed independently. When a rule satisfies agenda separability, it also becomes computationally simpler when applied to decomposable agendas, because the rule can be applied independently to every subagenda of the decomposition.  Agenda separability also offers a weak form of strategyproofness: no agent is able to influence the outcome on some issue from one subagenda of the partition by strategically reporting judgments about another subagenda. 
 
Of course,  a weakening of independence is meaningful only if there are  rules that satisfy it. Not only we show that this is the case, but we also show that agenda separability is discriminant, in the sense that among the known judgment aggregation rules, some satisfy it and some do not. This leads us to see agenda independence as a possible means of choosing a judgment aggregation rule against another. 

The paper is structured as follows. Section~\ref{sec:preliminaries} introduces the background. Section~\ref{sec:relwrk} discusses the independence property. In Sections~\ref{sec:agenda} and \ref{oas} we define two notions of agenda separability, and we identify some rules that satisfy them and some that do not.  Section \ref{sec:summary} contains a summary and discussion.

\section{Preliminaries}\label{sec:preliminaries}

Let $\La$ be a set of well-formed propositional logical formulas, including $\top$ (tautology) and $\bot$ (contradiction).  
An {\em issue} is a pair of formulas $ \ai, \neg \ai$ where $\ai\in\La$ and $\ai $ is neither a  tautology nor a contradiction. 
 An  {\em agenda} $\A$ is a finite set of issues and has the form $\A  = \{\ai_1,\neg \ai_1, \ldots, \ai_m, \neg \ai_m\}$. The {\em preagenda} $\PA$ associated with $\A$ is  $\PA = \{ \ai_1, \ldots, \ai_m\}$.  
  A {\em sub-agenda} is a subset of issues from $\A$.   A {\em sub-preagenda} is a subset of $\PA$. 
An agenda usually comes with an {\em integrity constraint} $\Ct$, which is a consistent formula whose role is to filter out inadmissible judgment sets. $(\A, \Ct)$ is called a {\em constrained agenda}.
As a classical example, given a set of candidates  $\Alt = \{x_1, \ldots, x_m\}$, the {\em preference agenda} over $\Alt$ \cite{Dietrich07} is $\A_C = \{x_iPx_j | 1 \leq i < j \leq m\}$, and the associated integrity constraint is $\Ct_C = \bigwedge_{i,j,k} \left( x_iPx_j \wedge x_jPx_k \rightarrow x_iPx_k \right)$. When $\Ct$ is not specified, by default it is equal to $\top$.

 A {\em  judgment} on $\ai \in \PA$ is one of $\ai$ or $\neg \ai$.  A {\em judgment set} $\Js$ is a subset of $\A$.
$\Js$ is {\em complete} iff  for each $\ai\in\PA$, either $\ai\in\Js$ or $\neg\ai\in\Js$.
 A judgment set $\Js$ (and in general, a set of propositional formulas) is {\em $\Ct$-consistent} if and only if  $\Js \cup \{\Ct\} \nvDash \bot$. 
  Let $\Dmc_{\A,\Ct}$ be the set of all {\em complete and consistent} judgment sets.
  To lighten the notations, we will generally say that a judgment set  is {\em consistent} instead of $\Ct$-consistent, and note 
$\Dmc_{\A}$ instead of $\Dmc_{\A,\Ct}$. 
 
 A {\em profile} $\Pf =\langle\Js_1, \ldots, \Js_n \rangle \in \Dmc^n_{\A}$ is a collection of  complete and  consistent individual judgment sets.  We further define $ N(\Pf,\ai) =  | \{ i \mid \varphi \in \Js_i\}|$ to be the number of all agents in $\Pf$  whose judgment set includes $\varphi$. The order $\ssu{\Pf}$ is the weak order over $\A$ defined by  $\ai \ssu{\Pf} \phi$   if and only if  $\npf{\Pf}{\ai} \geq \npf{\Pf}{\phi}$.

The {\em restriction} of  $\Pf = \langle \Js_1, \ldots, \Js_n \rangle$  over a sub-agenda  $\SA_1$ of $\SA$ is defined as 
 $\rest{\Pf}{\SA_1} = \langle \Js_1 \cap \SA_1, \ldots, \Js_n \cap \SA_1 \rangle$. 
 
Every consistent subset of the agenda  $S \subset \A$ can be extended in order to obtain a complete judgment set (there might be several such extensions). For a set $S$ of subsets of agenda , we define $ext(S) = \{ \Js \in \Dmc_{\A} \mid \mbox{ there exists } \Js' \in S \mbox{ such that } \Js' \subseteq \Js \}$.
 
A  {\em judgment aggregation rule}, for $n$ agents,  is a function $\F$ that maps any  constrained agenda $(\SA,\Ct)$   and any profile $\Pf \in \Dmc^n_{\A,\Ct}$ to a non-empty set of complete consistent judgment sets over $\SA$.\footnote{The reason why the (constrained) agenda is an argument of rules is that the notions we study need a rule to be applied to a variable agenda.   We omit writing   $\A,\Ct$      as an argument of $\F$ when defining $\F$ to improve the readability of the text.}
If  $\F$ always outputs a singleton then it is called a {\em resolute} rule.   The majoritarian judgment set associated with profile $\Pf$ contains all elements of the agenda that are supported by a majority of judgment sets in $\Pf$:
$m(\Pf)=\{\ai\in\A\mid N(\Pf,\ai) >\frac{n}{2}\}.$
A profile $\Pf$ is  {\em  majority-consistent} iff $m(\Pf)$ is consistent.

 Let $S \subseteq \La$.
We define $\Atoms(S)$ as the set of all  propositional variables appearing in $S$. For example, $\Atoms(\{ p, q \wedge r, \neg s \rightarrow \neg \neg p \}) = \{ p, q, r, s \}$. 

Given a set of formulas  $S$ and a formula $\Ct$, $S'\subseteq S$ is $\Ct$-consistent if $S' \cup \{\Ct\}$ is consistent, $S'$ is a maximal $\Ct$-consistent subset of $S$,  if $S'$ is $\Ct$-consistent and there is no $S''\supset S'$, $S'' \subseteq S$ that is $\Ct$-consistent. . We use   $max(S, \subseteq)$ to denote the maximal consistent subsets of $S$.
 The set  $S' \subseteq S$ is a maxcard 
$\Ct$-consistent subset of $S$ if $S'$ is  $\Ct$-consistent and there exists no $\Ct$-consistent set $S'' \subseteq S$ such that $|S'| < |S''|$. We use  $max(S, |.|)$ to denote the maxcard consistent subsets of $S$.

We now give the definitions of seven  judgment aggregation rules. They come from various places in the literature, where they sometimes appear with different names  \cite{TARK11,ADT2013,NehringPivato2011,NehringPP14,MillerOsherson08,EKM13}. 

Throughout the subsection,  $\Pf =\langle\Js_1, \ldots, \Js_n \rangle$ is a profile.  For two consistent and complete judgment sets $\Js, \Js'$ we denote their Hamming distance  as $d_H(\Js,\Js') = |\Js \setminus \Js'|$.

\begin{description}
\item[$\RMSA, \RMCSA$.] The maximum Condorcet rule ($\RMSA$) and the maxcard Condorcet rule ($\RMCSA$)  rules are defined as follows.
For every agenda $\A$, for every profile  $\Pf \in \Dmc^n_{\A}$,
$ \RMSA (\Pf) = \{ext(S) \mid S\in max(m(\Pf), \subseteq)\}$   and  \linebreak  $\RMCSA (\Pf)   = \{ext(S) \mid S\in max(m(\Pf), |.|)\}$. 
\item[$\RRA$.] 

For  $\A = \{\psi_1, \ldots, \psi_{2m}\}$ and a permutation $\sigma$ of $\{1,\ldots, 2m\}$, let $>_\sigma$ be the linear order on $\A$ defined by $\psi_{\sigma(1)}>_\sigma...>_\sigma \psi_{\sigma(2m)}$. We say that $>_{\sigma}$ is  compatible with $\su$ if $\psi_{\sigma(1)} \su...\su \psi_{\sigma(2m)}$. The ranked agenda rule $\RRA$ is defined as $\Js \in \RRA(\Pf)$ if and only if there exists a permutation $\sigma$ such that $>_\sigma$ is compatible with $\su$ and such that $\Js = \Js_\sigma$ is obtained by the following procedure:

 {
$\bullet$ ~$S := \emptyset$;\\
$\bullet$ ~   for $j = 1, \ldots, 2m$ do \\
$\bullet$ ~~~~~ if $S \cup \{\psi_{\sigma(j)} \}$ is consistent, let $S := S \cup \{ \psi_{\sigma(j)}  \}$; \\
$\bullet$ ~$\Js_\sigma := S$.
}

\item[$\RMAX (\Pf)$]$ = \argmin{\Js \in \Dmc_{\A}}\; \mmx{i=1}{n}\; d_H(\Js_i, \Js)$.

\item[$\RS$.] A {\em scoring function} \cite{Dietrich:2013} is defined as $s:~\Dmc_{\A}~\times~\A~\rightarrow~\mathbb{R}^+$. Given a scoring function $s$, the judgment aggregation rule $R_s$ is defined as  
 $\RS(\Pf) = \argmax{\Js \in \Dmc_{\A}}\; \ssum{\Js_i \in \Pf}\;\ssum{\varphi \in \Js}\; s(\Js_i,\varphi)$.  
If we choose the {\em reversal} scoring function $s_{rev}(\Js_i,\varphi)$ as the minimal number of judgment reversals needed in $\Js_i$ in order to reject $\varphi$ then we get the {\em reversal scoring rule} $\REVS$ \cite{Dietrich:2013}. 
 If we choose the scoring function $s$ defined by $s_{med}(\Js_i,\varphi) = 1$ if $\varphi \in J_i$ and $0$ if $\varphi \notin J_i$ then $R_s$ is exactly the {\em median} rule, i.e.\ $R_s \equiv \RMWA$.
 
 \item[$\RMWA(\Pf)$] =
$$\argmax{\Js \in \Dmc_{\A}}~\sum_{\ai \in \Js}~\npf{\Pf}{\ai}~=~\argmin{\Js \in \Dmc_{\A}}~\sum_{\Js_i\in\Pf}d_H(\Js_i, \Js).$$

\item[$\RMNAC$.] Given profiles 
$\Pf=\langle \Js_1,\ldots, \Js_n\rangle$ and $\Qf=\langle \Js^{\prime}_1,\ldots, \Js^{\prime}_n\rangle$ in $\Dmc^n_{\A}$,
let
$D_H(\Pf,\Qf) =  \ssm{i=1}{n}d_H(\Js_i,\Js^{\prime}_i)$.
$\RMNAC(\Pf) = \{ ext(m(\Qf)) \mid \Qf \in \argmin{\Qf' \in \Dmc^n_{\A}}\;D_H(\Pf,\Qf') \}$.
\end{description}

The rules defined here are irresolute, but  similarly as in voting theory, can be made resolute  by composing them with a tie-breaking mechanism.  
A simple way of defining a tie-breaking mechanism $\theta$ is via a priority relation $>_\theta$ over 
consistent and complete judgment sets. 
Given an irresolute rule $\F$ and a tie-breaking mechanism $\theta$, 
the resolute rule $\F_\theta$ is the rule that, given $\Pf$, returns the maximal (with respect to $>_\theta$) element of $\F(\Pf)$. 

\section{Relaxing Independence}\label{sec:relwrk}

A judgment aggregation rule $F$ satisfies {\em independence of irrelevant alternatives} (IIA) if 
for every two profiles $\Pf, \Pf' \in \Dmc^n_{\A}$, and every $\ai \in \A$,  if  $\rest{\Pf}{\{\ai,\neg \ai\}} = \rest{\Pf'}{\{\ai,\neg \ai\}}$, then $\varphi \in F(\Pf)$ iff  $\varphi \in F(\Pf')$.
Independence is a very strong property:  together with three seemingly innocuous properties, namely universal domain ($F$ is defined for every profile), unanimity principle,
 and collective rationality ($F$ outputs complete and consistent judgment sets),
 it implies dictatorship \cite{Dietrich07}. 
 
 In \cite{Mongin2008} a relaxation of IIA is proposed, called  {\em Independence of Irrelevant Propositional  Alternatives} (IIPA). IIPA is the requirement that for every   $\Pf, \Pf' \in \Dmc^n_{\A}$, and every $\ai \in \A$ {\em that is either an atom or a negation of an atom},  if  $\rest{\Pf}{\{\ai,\neg \ai\}} = \rest{\Pf'}{\{\ai,\neg \ai\}}$, then $\varphi \in F(\Pf)$ iff  $\varphi \in F(\Pf')$.  However \cite{Mongin2008} also shows that IIPA, modulo some conditions on the agenda, is not consistent with the unanimity preservation requirement.

Now, while it is natural to expect that the individual judgments on logically related issues will influence the choice of collective judgments for those issues, it is also natural to expect that individual judgments over logically unrelated issues will have no impact on them. To illustrate this point, we give an example from a collective decision making problem that occurs in crowdcomputing. 
 
There are a lot of tasks that are rather simple for a human to do, but fairly complicated for a computer, such as labelling images, choosing the best out of several images, identifying music segments etc. These types of tasks are called  human intelligence tasks (\textsc{hit}s). Considering the task of cataloguing pictures by location,  that is outsourced as \textsc{hit}s to an unspecified, but finite, group of people. The people undertaking these tasks should label each photo in a series and also indicate reasons for their labelling. For example: the photo is of Paris ($p$) if the Eiffel tower can be seen on it ($e$) or the Triumphal arc can be seen on it ($t$); the photo is of Rome ($r$) if the Colosseum  can be seen on it ($c$)  or the Spanish Steps can be seen on it ($s$). The commissioner of the \textsc{hit}s will  aggregate the individual labelings and assign the labels that are collectively supported. The problem of finding which labelings are collectively supported can be solved as a judgment aggregation problem; see the work by Endriss and Fern\'andez \cite{EndrissFernandezACL2013} for a similar view of crowdsourcing as a judgment aggregation problem.  
Assume, for simplicity, that we have three labellers (or agents) and two pictures. Furthermore, the commissioner is only interested in whether the first photo is of Paris and whether the second one is of Rome. The problem for the first photo is represented with the agenda $[\A_1] = \{ p,e,t, e\vee t \rightarrow p\}$, while the problem for the second photo is represented with the agenda  $[\A_2] = \{ r,c,s, c \vee s \rightarrow r\}$. Observe that $\Atoms(\SA_1) \cap \Atoms(\SA_2) = \emptyset$. The agents get the pictures at the same time. Clearly, whether the first picture is of Paris or not has nothing to do with whether the second picture is of Rome or not, consequently we would expect that the collective judgments regarding issues in $\A_1$  depend only on the judgments given for these issues, but not on the individual judgments given for issues in $\A_2$.

In the next section we relax independence 
along this principle, defining a new property called {\em agenda separability}. 

\section{Agenda Separability}\label{sec:agenda}

Following the idea that only judgments on logically related issues should influence the collective judgment on each issue, 
we define agenda separability as the property requiring that when two agendas can be split into sub-agendas that are 
independent from each other, the output judgment sets can be obtained by first applying the rule on each sub-agenda separately 
and then taking the pairwise unions of judgment sets from the two resulting sets. 

A partition $\{ \SA_1, \SA_2\}$ of $\SA$ is an {\em independent partition of $\SA$} if for every $\Js^1 \in \Dmc_{\A_1}$ and $\Js^2 \in \Dmc_{\A_2}$, $\Js^1 \cup \Js^2$ is $\Ct$-consistent.\footnote{A stronger notion of independence, which makes sense only when $\Ct = \top$, is {\em syntactical agenda independence}: a partition $\{ \SA_1, \SA_2\}$ of $\SA$ is {\em syntactically independent} if  $\Atoms(\SA_1) \cap \Atoms(\SA_2) = \emptyset$. Clearly,  syntactical agenda independence implies agenda independence, because $\Atoms(\SA_1) \cap \Atoms(\SA_2) = \emptyset$ implies that $\SA_1$ and $\SA_2$ are independent.
Note that the implication is strict: for example, let $\A = \{x, \neg x, x \leftrightarrow y, \neg (x \leftrightarrow y)\} = \A_1 \cup \A_2$, $\Ct = \top$, $\A_1 = \{x, \neg x\}$ and  $\A_2 = \{x \leftrightarrow y, \neg (x \leftrightarrow y)\}$.  $\{ \SA_1, \SA_2\}$ is an independent partition of $\SA$ although $\Atoms(\SA_1) \cap \Atoms(\SA_2)  \neq \emptyset$.}

\begin{definition}[Agenda separability]
We say that rule $\R$ satisfies {\em agenda separability} ($\as$) if for every agenda $\A$, every independent
partition $\{ \A_1, \A_2 \}$ of $\A$, and all profiles $\Pf \in \Dmc^n_{\A}$, we have  
$$\R(\Pf) =  \{ \Js^1 \cup \Js^2  ~|~ 	\Js^1 \in \R(\rest{\Pf}{\SA_1}) \mbox{ and } \Js^2 \in \R(\rest{\Pf}{\SA_2}) \}.$$
\end{definition}

If $\R$ is a resolute rule, then the last line of the definition 
simplifies into $\R(\Pf) =  \R(\rest{\Pf}{\SA_1}) \cup \R(\rest{\Pf}{\SA_2}).$

Also, by associativity of $\cup$, this notion generalises to agendas that can be partitioned into a collection $\{\SA_1, \ldots, \SA_k\}$  such that for every  $\Js_1 \in \Dmc_{\A_1}, \ldots, \Js_k \in \Dmc_{\A_k}$, $\Js_1 \cup \ldots \cup \Js_k$ is consistent.
In that case, 
\[ \R(\Pf) = \left\{  \bigcup_{i=1}^k \Js^i \mathrel{\Big|} \Js^1 \in \R(\rest{\Pf}{\SA_1}), \ldots, \Js^k \in \R(\rest{\Pf}{\SA_k}) \right\}. \]
  IIA is   
defined for resolute rules only. 
We   show that agenda separability restricted to resolute rules is a weakening of IIA.

\begin{proposition}\label{iiaimpliesas}
 Any resolute judgment aggregation rule that satisfies IIA is agenda separable.  
\end{proposition}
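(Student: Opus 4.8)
The plan is to unwind both sides of the agenda-separability equation issue by issue and match them using IIA. Fix an agenda $\A$, an independent partition $\{\A_1,\A_2\}$ of $\A$, and a profile $\Pf\in\Dmc^n_\A$. Since $\R$ is resolute, write $\Js$ for the unique judgment set in $\R(\Pf)$, and $\Js^1$, $\Js^2$ for the unique judgment sets in $\R(\rest{\Pf}{\A_1})$ and $\R(\rest{\Pf}{\A_2})$; the goal is to prove $\Js=\Js^1\cup\Js^2$.

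First I would check that the right-hand side is a legitimate judgment set over $\A$. It is complete over $\A$ because $\Js^1$ picks one side of every issue in $\A_1$, $\Js^2$ picks one side of every issue in $\A_2$, and $\A=\A_1\cup\A_2$; and it is consistent because $\{\A_1,\A_2\}$ is an independent partition and $\Js^1\in\Dmc_{\A_1}$, $\Js^2\in\Dmc_{\A_2}$ --- this is the only point where independence of the partition is actually used. Since $\Js$ and $\Js^1\cup\Js^2$ are both subsets of $\A$, it is now enough to show that they contain exactly the same formulas of $\A$. As $\{\A_1,\A_2\}$ is a partition, every $\ai\in\A$ lies in exactly one block; for $\ai\in\A_1$ (the case $\ai\in\A_2$ being symmetric), the fact that $\Js^2\subseteq\A_2$ and $\ai\notin\A_2$ reduces ``$\ai\in\Js^1\cup\Js^2$'' to ``$\ai\in\Js^1$'', so everything comes down to showing: for every $\ai\in\A_1$, $\ai\in\R(\Pf)$ if and only if $\ai\in\R(\rest{\Pf}{\A_1})$.

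This last step is where IIA is used. The profiles $\Pf$ and $\rest{\Pf}{\A_1}$ record exactly the same individual judgments on the issue $\{\ai,\neg\ai\}$, because $\{\ai,\neg\ai\}\subseteq\A_1$ and passing from $\Pf$ to $\rest{\Pf}{\A_1}$ leaves that issue untouched; IIA then gives $\ai\in\R(\Pf)$ iff $\ai\in\R(\rest{\Pf}{\A_1})$. Applying this to every $\ai\in\A_1$, and symmetrically to every $\ai\in\A_2$, yields $\Js=\Js^1\cup\Js^2$, which is exactly agenda separability for the resolute rule $\R$.

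The point that needs care --- and the only real obstacle I see --- is that IIA is invoked here for two profiles that live on different agendas ($\Pf$ on $\A$, $\rest{\Pf}{\A_1}$ on $\A_1$), whereas the axiom is literally stated for two profiles over a single fixed agenda. The argument therefore relies on reading IIA in the natural, agenda-independent way in which it is motivated: the collective verdict on an issue depends only on the individual verdicts on that issue, regardless of which other issues happen to be in the agenda. This reading should be stated explicitly, since a strictly within-a-fixed-agenda version of IIA would impose no relation between the behaviour of $\R$ on $\A$ and on $\A_1$, and would not license the step. Granting it, the remainder of the proof is routine bookkeeping.
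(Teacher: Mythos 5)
Your proof is correct and is essentially the paper's argument: the paper packages the issue-wise application of IIA into per-issue resolute functions $F_i$ with $\R(\Pf)=\bigcup_{i=1}^m F_i(\rest{\Pf}{\{\varphi_i,\neg\varphi_i\}})$ and then regroups the union over the two blocks, but the substance is the same pointwise matching you carry out. The cross-agenda reading of IIA that you rightly flag is equally needed, and equally implicit, in the paper's proof, since identifying $\bigcup_{i=1}^k F_i(\rest{\Pf}{\{\varphi_i,\neg\varphi_i\}})$ with $\R(\rest{\Pf}{\A_1})$ requires the same functions $F_i$ to govern $\R$ on both the agenda $\A$ and the sub-agenda $\A_1$.
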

\begin{proof}
If a resolute rule $\R$ satisfies IIA, we can write
$\R(\Pf) =  \bigcup_{i=1}^m F_i(\rest{\Pf}{\{\varphi_i, \neg \varphi_i\}})$ where $\PA = \{\varphi_1, \ldots \varphi_m\}$ and $F_1, \ldots, F_m$ are resolute rules. 
Let  $\{\A_1, \A_2 \}$ be an independent partition of $\A$. 
Without loss of generality, assume $[\A_1] = \{ \varphi_1, \ldots, \varphi_k\}$ and $[\A_2] = \{ \varphi_{k+1}, \ldots, \varphi_m\}$. We have  $\R(\Pf) = \bigcup_{i=1}^k F_i(\rest{\Pf}{\{\varphi_i, \neg \varphi_i\}}) \cup \bigcup_{i=k+1}^m F_{i}(\rest{\Pf}{\{\varphi_{i}, \neg \varphi_{i}\}}) = 
\R(\rest{\Pf}{\SA_1}) \cup \R(\rest{\Pf}{\SA_2})$.
 \end{proof}
 
We shall see that the reverse implication does not hold.  

\begin{definition}   
The scoring function $s$ is {\em separable} if for every $\A$ and every independent partition $\{\A_1, \A_2\}$ of $\A$, 
 for $i \in\{ 1,2\}$, and every $\Js \in \Dmc_{\A}$ and $\varphi \in \A_i$, we have  $s(\Js , \varphi) = s(\Js \cap \SA_i,\varphi)$.
 \end{definition}

We omit the easy proofs of the next two results. 

 \begin{proposition}\label{scoringas}
If $s$ is a separable scoring function, then $\RS$ is agenda separable. 
 \end{proposition}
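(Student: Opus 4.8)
The plan is to show that when the scoring function $s$ is separable, the objective that $\RS$ maximizes decomposes additively over an independent partition $\{\A_1, \A_2\}$, so that maximizing the total score is equivalent to independently maximizing the score restricted to each sub-agenda and then combining the winners. First I would fix an agenda $\A$, an independent partition $\{\A_1, \A_2\}$, and a profile $\Pf = \langle \Js_1, \ldots, \Js_n \rangle \in \Dmc^n_\A$. The key structural fact I would invoke is that, because $\{\A_1, \A_2\}$ is an independent partition, every $\Js \in \Dmc_\A$ can be written uniquely as $\Js = \Js^1 \cup \Js^2$ with $\Js^1 = \Js \cap \A_1 \in \Dmc_{\A_1}$ and $\Js^2 = \Js \cap \A_2 \in \Dmc_{\A_2}$, and conversely every such pair yields a consistent complete $\Js \in \Dmc_\A$; this bijection between $\Dmc_\A$ and $\Dmc_{\A_1} \times \Dmc_{\A_2}$ is what makes the separation possible.

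Next I would rewrite the scoring objective. For any $\Js \in \Dmc_\A$, since $\A = \A_1 \uplus \A_2$,
$$\sum_{\Js_i \in \Pf} \sum_{\varphi \in \Js} s(\Js_i, \varphi) = \sum_{\Js_i \in \Pf} \sum_{\varphi \in \Js \cap \A_1} s(\Js_i, \varphi) + \sum_{\Js_i \in \Pf} \sum_{\varphi \in \Js \cap \A_2} s(\Js_i, \varphi).$$
By separability of $s$, for $\varphi \in \A_1$ we have $s(\Js_i, \varphi) = s(\Js_i \cap \A_1, \varphi)$, and similarly for $\A_2$, so the two summands become exactly the scoring objectives for $\RS$ applied to $\rest{\Pf}{\A_1}$ on candidate $\Js^1$ and to $\rest{\Pf}{\A_2}$ on candidate $\Js^2$ respectively. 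Writing $g_1(\Js^1)$ and $g_2(\Js^2)$ for these two quantities, the total score of $\Js = \Js^1 \cup \Js^2$ equals $g_1(\Js^1) + g_2(\Js^2)$, where $g_1$ depends only on $\Js^1$ and $g_2$ only on $\Js^2$.

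Finally I would conclude by the elementary fact that $\argmax$ of a separable sum over a product domain is the product of the $\argmax$'s: $(\Js^1, \Js^2)$ maximizes $g_1(\Js^1) + g_2(\Js^2)$ over $\Dmc_{\A_1} \times \Dmc_{\A_2}$ if and only if $\Js^1 \in \argmax g_1$ and $\Js^2 \in \argmax g_2$. Translating back through the bijection, $\Js \in \RS(\Pf)$ iff $\Js \cap \A_1 \in \RS(\rest{\Pf}{\A_1})$ and $\Js \cap \A_2 \in \RS(\rest{\Pf}{\A_2})$, which is precisely the agenda separability identity. I do not expect any real obstacle here — the paper itself flags this as an easy proof — but the one point requiring a word of care is the bijection $\Dmc_\A \cong \Dmc_{\A_1} \times \Dmc_{\A_2}$: one must use the definition of independent partition to guarantee that gluing two consistent halves always yields a $\Ct$-consistent whole (and completeness is immediate since $\PA = [\A_1] \uplus [\A_2]$), so that the maximizations over the two factors are genuinely unconstrained relative to each other.
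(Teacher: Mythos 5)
Your proof is correct, and it is precisely the argument the paper has in mind when it states that this proof is easy and omits it: the independent-partition condition gives the bijection $\Dmc_\A \cong \Dmc_{\A_1}\times\Dmc_{\A_2}$, separability of $s$ makes the score additive across the two factors, and the $\argmax$ of a separable sum over a product domain factorizes. No gaps; the one subtlety you flag (that gluing two consistent halves stays $\Ct$-consistent) is exactly the point where the independence of the partition is needed.
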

 
 \begin{corollary} 
 $\RMWA$ and $\REVS$ are agenda separable.
 \end{corollary}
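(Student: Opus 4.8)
The plan is to derive the corollary directly from Proposition~\ref{scoringas}. Recall that the median rule $\RMWA$ is exactly the scoring rule $R_{s_{med}}$ associated with the scoring function $s_{med}$, and that the reversal scoring rule $\REVS$ is $R_{s_{rev}}$ associated with the reversal scoring function $s_{rev}$. Hence it suffices to show that both $s_{med}$ and $s_{rev}$ are separable; Proposition~\ref{scoringas} then immediately gives that $\RMWA$ and $\REVS$ are agenda separable.

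Separability of $s_{med}$ is immediate: $s_{med}(\Js,\varphi)$ depends only on whether $\varphi\in\Js$, and if $\{\A_1,\A_2\}$ is an independent partition of $\A$ and $\varphi\in\A_i$, then $\varphi\in\Js$ iff $\varphi\in\Js\cap\SA_i$, so $s_{med}(\Js,\varphi)=s_{med}(\Js\cap\SA_i,\varphi)$. For $s_{rev}$, fix an independent partition $\{\A_1,\A_2\}$ and, without loss of generality, $\varphi\in\A_1$. Here $s_{rev}(\Js,\varphi)$ is the least $d_H(\Js,\Js')$ over $\Js'\in\Dmc_{\A}$ with $\neg\varphi\in\Js'$. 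Given a minimiser $\Js'$, put $\Js''=(\Js'\cap\A_1)\cup(\Js\cap\A_2)$: since $\Js'\cap\A_1\in\Dmc_{\A_1}$ and $\Js\cap\A_2\in\Dmc_{\A_2}$, independence of the partition yields $\Js''\in\Dmc_{\A}$, while $\neg\varphi\in\Js''$ and $d_H(\Js,\Js'')\le d_H(\Js,\Js')$ because $\Js''$ and $\Js$ agree on $\A_2$. So the minimum is attained by some judgment set agreeing with $\Js$ on $\A_2$, and for any such set $d_H(\Js,\Js'')=d_H(\Js\cap\A_1,\Js''\cap\A_1)$; conversely every $K\in\Dmc_{\A_1}$ with $\neg\varphi\in K$ extends, again by independence, to $K\cup(\Js\cap\A_2)\in\Dmc_{\A}$. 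Comparing the two minima gives $s_{rev}(\Js,\varphi)=s_{rev}(\Js\cap\SA_1,\varphi)$, so $s_{rev}$ is separable, and the corollary follows from Proposition~\ref{scoringas}.

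The only step that needs any care is the reversal case, and there the essential use of the hypothesis is precisely the defining property of an independent partition: it lets us both discard the $\A_2$-component of an optimal reversal of $\Js$ and re-attach the $\A_2$-component of $\Js$ to an optimal reversal of $\Js\cap\A_1$ without ever breaking $\Ct$-consistency. (Should one worry about intermediate states along a sequence of single-issue flips, it suffices to observe that the flips of $\A_1$-issues can be performed in any order and independently of the $\A_2$-part, so the count is unaffected.) Everything else is routine bookkeeping.
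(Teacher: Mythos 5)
Your proof is correct and follows exactly the route the paper intends: the paper omits the proof as easy, deriving the corollary from Proposition~\ref{scoringas} once $s_{med}$ and $s_{rev}$ are checked to be separable, which is precisely what you do. Your swap argument for $s_{rev}$ (replacing the $\A_2$-part of an optimal reversal by $\Js\cap\A_2$ and using independence of the partition to preserve consistency) is the right way to fill in the one non-trivial step.
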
 


 \begin{proposition}\label{prp:AS} 
 $\RMSA$, 
 $\RMCSA$,  
 $\RRA$, and $\RMNAC$ are agenda separable.  
$\RMAX$ is not agenda separable. 
 \end{proposition}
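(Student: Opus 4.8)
The plan is to reduce everything to one \emph{decomposition lemma} about independent partitions, and then check each rule against it. Fix an independent partition $\{\A_1,\A_2\}$ of $\A$ and, for $\Pf\in\Dmc^n_\A$, abbreviate $\Pf^i:=\rest{\Pf}{\A_i}$. I would first establish: (i) for $\varphi\in\A_i$, $N(\Pf,\varphi)=N(\Pf^i,\varphi)$, hence $m(\Pf)\cap\A_i=m(\Pf^i)$ and the restriction of $\su$ to $\A_i$ is $\succsim_{\Pf^i}$; (ii) a set $S\subseteq\A$ is $\Ct$-consistent iff $S\cap\A_1$ and $S\cap\A_2$ both are --- the nontrivial direction extends each $S\cap\A_i$ to some $\Js^i\in\Dmc_{\A_i}$ and uses independence to get $\Js^1\cup\Js^2\in\Dmc_\A$ with $S\subseteq\Js^1\cup\Js^2$; in particular $\Dmc_\A=\{\Js^1\cup\Js^2\mid\Js^i\in\Dmc_{\A_i}\}$; (iii) consequently, for $S_1\subseteq\A_1$ and $S_2\subseteq\A_2$, $\{\Js\in\Dmc_\A\mid S_1\cup S_2\subseteq\Js\}=\{\Js^1\cup\Js^2\mid\Js^i\in\Dmc_{\A_i},\ S_i\subseteq\Js^i\}$, so that $ext$ ``passes through'' disjoint unions; (iv) $d_H$ is additive across the partition, $d_H(\Js^1\cup\Js^2,K^1\cup K^2)=d_H(\Js^1,K^1)+d_H(\Js^2,K^2)$, since $\A_1\cap\A_2=\emptyset$.

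\textbf{$\RMSA$ and $\RMCSA$.} By (i), $m(\Pf)=m(\Pf^1)\cup m(\Pf^2)$ with the two pieces in disjoint sub-agendas. Using (ii), any $\Ct$-consistent $T\subseteq m(\Pf)$ splits as $(T\cap m(\Pf^1))\cup(T\cap m(\Pf^2))$ into independently $\Ct$-consistent parts, with $|T|=|T\cap m(\Pf^1)|+|T\cap m(\Pf^2)|$; hence $max(m(\Pf),\subseteq)=\{S_1\cup S_2\mid S_i\in max(m(\Pf^i),\subseteq)\}$ and, likewise, $max(m(\Pf),|.|)=\{S_1\cup S_2\mid S_i\in max(m(\Pf^i),|.|)\}$ (one is maximising a sum of two independent quantities). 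Applying (iii) to push $ext$ through the union then gives $\R(\Pf)=\{\Js^1\cup\Js^2\mid\Js^1\in\R(\Pf^1),\ \Js^2\in\R(\Pf^2)\}$ for $\R\in\{\RMSA,\RMCSA\}$, i.e.\ agenda separability.

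\textbf{$\RMNAC$ and $\RRA$.} For $\RMNAC$, by (ii) the map $\Qf\mapsto(\rest{\Qf}{\A_1},\rest{\Qf}{\A_2})$ is a bijection between the feasible profiles over $\A$ and pairs of feasible profiles over $\A_1$ and $\A_2$, while (iv) gives $D_H(\Pf,\Qf)=D_H(\Pf^1,\rest{\Qf}{\A_1})+D_H(\Pf^2,\rest{\Qf}{\A_2})$; minimising a sum of two independent terms, the set of $D_H$-minimisers is exactly the set of componentwise unions of the minimisers over $\A_1$ and over $\A_2$, and then (i) and (iii) handle $m(\Qf)$ and $ext$. For $\RRA$ the argument is more delicate because the rule runs a sequential greedy procedure: along any permutation $\sigma$ compatible with $\su$, the running set $S$ is always $\Ct$-consistent, so $S\cap\A_2$ is, and (ii) shows that for $\psi\in\A_1$, ``$S\cup\{\psi\}$ consistent'' iff ``$(S\cap\A_1)\cup\{\psi\}$ consistent''; by induction on the steps, $S\cap\A_i$ then evolves exactly as the greedy run on $\A_i$ driven by $\sigma_i$, the restriction of $\sigma$ to $\A_i$, which by (i) is compatible with $\succsim_{\Pf^i}$, so $\Js_\sigma=\Js_{\sigma_1}\cup\Js_{\sigma_2}$. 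Conversely, any $\sigma_i$ compatible with $\succsim_{\Pf^i}$ can be merged --- processing $\A$ by non-increasing $N(\Pf,\cdot)$ and breaking $\A_1$-versus-$\A_2$ ties arbitrarily while keeping the internal orders $\sigma_1,\sigma_2$ --- into a $\sigma$ compatible with $\su$ whose restriction to $\A_i$ is $\sigma_i$, so $\Js_{\sigma_1}\cup\Js_{\sigma_2}$ is produced. Hence $\RRA(\Pf)=\{\Js^1\cup\Js^2\mid\Js^i\in\RRA(\Pf^i)\}$.

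\textbf{$\RMAX$ is not agenda separable.} I would exhibit a counterexample with $[\A_1]=\{p\}$, $[\A_2]=\{q\}$, $\Ct=\top$ (so that $\{\A_1,\A_2\}$ is a syntactically independent --- hence independent --- partition of $\A$), $n=2$, and $\Pf=\langle\{p,q\},\{\neg p,\neg q\}\rangle$. Over $\A$ the worst-case Hamming distance is $2$ for $\{p,q\}$ and for $\{\neg p,\neg q\}$ but only $1$ for $\{p,\neg q\}$ and $\{\neg p,q\}$, so $\RMAX(\Pf)=\{\{p,\neg q\},\{\neg p,q\}\}$; over $\A_1$ (and symmetrically over $\A_2$) both complete judgment sets have worst-case distance $1$, so $\RMAX(\rest{\Pf}{\A_1})=\{\{p\},\{\neg p\}\}$ and $\RMAX(\rest{\Pf}{\A_2})=\{\{q\},\{\neg q\}\}$, whose pairwise unions exhaust $\Dmc_\A$ --- strictly larger than $\RMAX(\Pf)$. (Taking $\lceil n/2\rceil$ agents at $\{p,q\}$ and $\lfloor n/2\rfloor$ at $\{\neg p,\neg q\}$ produces the same phenomenon for every $n\ge 2$.) The main obstacle is the $\RRA$ case: unlike the other rules it is not given by a one-line algebraic formula but by a state-dependent procedure, so one must argue both that the consistency test performed at each step never couples the two sub-agendas and that the permutations compatible with $\su$ correspond exactly to the interleavings of permutations compatible with $\succsim_{\Pf^1}$ and with $\succsim_{\Pf^2}$.
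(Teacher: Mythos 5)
Your proof is correct, and it is organised differently from the paper's. You factor everything through a single decomposition lemma (majority counts, consistency, $ext$, and $d_H$ all split across an independent partition) and then read off each rule as an optimisation of a sum of two independent terms, so that the optimiser set is the componentwise product; the paper instead proves each of the two inclusions separately by a contradiction argument (``if one component were not optimal, swap it in and improve the whole''), which is the same underlying fact but never isolated as a lemma. Two further genuine differences: the paper does not prove $\RMSA$ and $\RRA$ here at all --- it defers them to the strictly stronger overlapping-agenda-separability result of its Section~5 --- whereas you prove them directly for the disjoint case (your greedy-run induction and merge-of-compatible-orders argument for $\RRA$ is essentially the disjoint specialisation of the paper's later proof sketch, and is complete as you state it, since the consistency test at each step decouples and the running set $S\cap\A_2$ is always consistent). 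Finally, your $\RMAX$ counterexample ($[\A_1]=\{p\}$, $[\A_2]=\{q\}$, one agent at $\{p,q\}$ and one at $\{\neg p,\neg q\}$) is simpler than the paper's three-agent, four-issue example and is easily verified: $\RMAX(\Pf)=\{\{p,\neg q\},\{\neg p,q\}\}$ while the pairwise unions of the restricted outputs give all four complete judgment sets. Your approach buys reusability (the lemma immediately handles any rule defined by optimising an additive or componentwise objective) at the cost of a slightly longer setup; the paper's buys brevity per rule but repeats the same swap argument several times.
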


\begin{proof} 
For  $\RMSA$ and  $\RRA$, this will be a consequence of a stronger result proven in Section \ref{oas}, therefore we give a proof only for 
 $\RMCSA$ and $\RMNAC$. Let $\{\A_1, \A_2\}$ be an independent partition of $\A$. 

\noindent \textbf{$\RMCSA$.}
Denote $B_1 = m(P_1)$, $B_2 = m(P_2)$ and $B = m(P)$. 
Let $\profiles = \{ \Js^1 \cup \Js^2 ~|~ \Js^1 \in \RMCSA(\Pf_1) \mbox{ and } \Js^2 \in \RMCSA(\Pf_2)  \}.$
We first show that $\RMCSA(\Pf) \subseteq \profiles$.
Let $\Jstar \in \RMCSA(\Pf)$; thus $\Jstar \in \MCC{B}$.
Let $\Jstar^1 = \Jstar \cap \SA_1$ and $\Jstar^2 = \Jstar \cap \SA_2$. 
$\Jstar^1$ and $\Jstar^2$ are consistent, because $\Jstar$ is consistent. Assume
$\Jstar^1 \notin \MCC{B_1}$.
Since $\Jstar^1$ is consistent,
there exists
$\JdoubleStar^1 \in \MCC{B_1}$ such that $|\JdoubleStar^1| > |\Jstar^1| $. 
Let $\JdoubleStar = \JdoubleStar^1 \cup \Jstar^2$. 
Because $\{\A_1, \A_2\}$ is an independent partition of $\A$,
the consistency of $\JdoubleStar^1$ and of $\Jstar^2$ implies the consistency of $\JdoubleStar$. But then
$|\JdoubleStar|  > |\Jstar|$, which contradicts $\Jstar \in \MCC{B}$.
Therefore, $\Jstar^1  \in \MCC{B_1}$. Similarly, $\Jstar^2 \in \MCC{B_2}$. Thus, $\Jstar \in \profiles$.

Now we show that $\profiles \subseteq \RMCSA(\Pf)$.
Let $\Js^1 \in \RMCSA(\Pf_1)$ and $\Js^2 \in \RMCSA(\Pf_2)$, that is,
$\Js^1 \in \MCC{B_1}$ and  $\Js^2 \in \MCC{B_2}$. 
Let us show that 
$\Js = \Js^1 \cup \Js^2 \in \MCC{B}$. 
Because $\{\A_1, \A_2\}$ is an independent partition of $\A$,
$\Js$ is consistent.
Suppose that there exists 
$\Jstar \in \MCC{B}$ such that $|\Jstar| > |\Js|$. 
Let $\Jstar^1 =  \Jstar \cap B_1$ and $\Jstar^2 = \Jstar \cap B_2$. 
Both $\Jstar^1$ and $\Jstar^2$ are consistent, and $|\Jstar| > |\Js|$ implies that 
$|\JdoubleStar^1| > |\Jstar^1|$ or $|\JdoubleStar^2| > |\Jstar^2|$, which contradicts
$\Jstar^1 \notin \MCC{B_1}$ and $\Jstar^2 \notin \MCC{B_2}$.
Thus, it must be that $\Js \in \MCC{B}$ and, consequently, $\Js \in \RMCSA(\Pf)$. 

 \noindent \textbf{$\RMNAC$.}
Let $X \subseteq \Dmc^n_{\A}$ be the set of all profiles $\Qf$ such that  $ext(m(\Qf)) \subseteq \Dmc_{\A}$, 
$\CMC(\Pf) = \mbox{argmin}_{\Qf \in X}D_H(\Pf,\Qf)$, and
$\UA = \{ \Js^1 \cup \Js^2 ~|~ \Js^1 \in$ $\RMNAC(\rest{\Pf}{\SA_1})$  and $\RMNAC(\rest{\Pf}{\SA_2}) \} $.

We first show that $\RMNAC(\Pf) \subseteq \UA$. 
Let $\Jcirc \in \RMNAC(\Pf)$. Let $\Jcirc^1 = \Jcirc \cap \SA_1$ and 
$\Jcirc^2 = \Jcirc \cap \SA_2$. 
 Since $\Jcirc \in \RMNAC(\Pf)$ then there exists $\Qf \in \CMC(\Pf)$ such that  $\Jcirc \in ext(\m(\Qf))$. 
 Let us show that $\rest{\Qf}{\SA_1} \in \CMC(\rest{\Pf}{\SA_1})$.   
Suppose that $\rest{\Qf}{\SA_1} \notin \CMC(\rest{\Pf}{\SA_1})$. Then, there exists a majority-consistent $\Qstar_1 \in \Dmc_{\SA_1}^n$, 
$\Qstar_1 = \langle \Istar_1, \ldots, \Istar_n \rangle$  such that 
$D_H(\Qstar_1, \rest{\Pf}{\SA_1}) < D_H(\rest{\Qf}{\SA_1}, \rest{\Pf}{\SA_1})$. 
Let $\Qf = \langle \Is_1, \ldots, \Is_n \rangle$. Let $\rest{\Qf}{\SA_1} = \langle \Is_1^{1}, \ldots, \Is_n^{1} \rangle$ and
$\rest{\Qf}{\SA_2} = \langle \Is_1^{2}, \ldots, \Is_n^{2} \rangle$.  Define 
$\Qstar = \langle \Istar_1 \cup \Is_1^2, \ldots, \Istar_n \cup \Is_n^2 \rangle$. 
Because $\{\A_1, \A_2\}$ is an independent partition of $\A$,
$\Qstar$ is a majority-consistent profile. Note also that $D_H(\Qstar, \Pf) < D_H(\Qf, \Pf)$. Contradiction.
Thus,  $\rest{\Qf}{\SA_1} \in \CMC(\rest{\Pf}{\SA_1})$, and for the same reasons, 
$\rest{\Qf}{\SA_2} \in \CMC(\rest{\Pf}{\SA_2})$. Therefore, $\Jcirc^1 \in \RMNAC(\rest{\Pf}{\SA_1})$, $\Jcirc^2 \in \RMNAC(\rest{\Pf}{\SA_2})$, and
$\RMNAC \subseteq \UA$.

We now show that $\UA \subseteq \RMNAC$. Let $\Jcirc^1 \in \RMNAC(\rest{\Pf}{\SA_1})$ and $\Jcirc^2 \in \RMNAC(\rest{\Pf}{\SA_2})$.
Thus, there exist  
profiles $\Qf_1 \in \CMC(\rest{\Pf}{\SA_1})$ and $\Qf_2 \in \CMC(\rest{\Pf}{\SA_2})$
such that $\Jcirc^1 \in ext(\m(\Qf_1))$  and $\Jcirc^2 \in ext(\m(\Qf_2))$. Let $\Qf_1 = \langle \Qf_1^1, \ldots, \Qf_n^1 \rangle$,
$\Qf_2 = \langle \Qf_1^2, \ldots, \Qf_n^2 \rangle$, and $\Qf = \langle \Qf_1^1 \cup \Qf_1^2, \ldots, \Qf_n^1 \cup \Qf_n^2 \rangle.$
Because $\{\A_1, \A_2\}$ is an independent partition of $\A$,
 $\Qf$ is majority-consistent.
 
Let us show that $\Qf \in \CMC(\Pf)$.  
Assume that $\Qf \notin \CMC(\Pf)$.
Then there exists $\Qstar \in \CMC(\Pf)$ s.t. $D_H(\Qstar, \Pf) < D_H(\Qf, \Pf)$.
Observe that
$ D_H( \rest{\Qstar}{\SA_1}, \Pf ) + D_H( \rest{\Qstar}{\SA_2}, \Pf ) <  D_H( \rest{\Qf}{\SA_1}, \Pf ) + D_H( \rest{\Qf}{\SA_2}, \Pf ). $
This means that
$D_H( \rest{\Qstar}{\SA_1}, \Pf ) <  D_H( \rest{\Qf}{\SA_1}, \Pf )$ or   $D_H( \rest{\Qstar}{\SA_2}, \Pf ) < D_H( \rest{\Qf}{\SA_2}, \Pf ) $.
Recall that $\rest{\Qf}{\SA_1} = \Qf_1$ and $\rest{\Qf}{\SA_2} = \Qf_2$.
Thus, $D_H( \rest{\Qstar}{\SA_1}, \Pf ) <  D_H( \Qf_1, \Pf )$ or $D_H( \rest{\Qstar}{\SA_2}, \Pf ) < D_H( \Qf_2, \Pf )$, which, together with the fact that $\rest{\Qstar}{\SA_1}$ and  $\rest{\Qstar}{\SA_2}$ are majority-consistent, contradicts
$\Qf_1 \in \CMC(\rest{\Pf}{\SA_1})$ and  $\Qf_2 \in \CMC(\rest{\Pf}{\SA_2})$.  
Thus,   
$\Qf \in \CMC(\Pf)$. Note that $\Jcirc^1 \cup \Jcirc^2 \in \m(\Qf)$.
This implies that $\Jcirc^1 \cup \Jcirc^2 \in \RMNAC(\Pf)$.

\textbf{$\RMAX$.}
We provide a counter example.
Let $\A~=~\SA_1~\cup~\SA_2$ with 
$[\A_1] = \{ p, q,  p\wedge q \}$ and $[\A_2] = \{ t\}$.
Consider the profile $\Pf$ from Figure \ref{fig:rmax-not-as} with $\Pf_1 = \rest{\Pf}{\SA_1}$ and $\Pf_2 = \rest{\Pf}{\SA_2}$. 
 \begin{figure}[!ht]
	\centering
		\begin{tabular}{r|cccc}
				Agents & $p$ & $q$ & $p \wedge q$ & $t$  \\ \hline
				$\Js_1$ & \cellcolor{blue!25} + & \cellcolor{blue!25} + &\cellcolor{blue!25}  +   &\cellcolor{gray!25} + \\
				$\Js_2$ & \cellcolor{blue!25} + & \cellcolor{blue!25} - &\cellcolor{blue!25} -   &\cellcolor{gray!25} + \\
				$\Js_3$ & \cellcolor{blue!25} - & \cellcolor{blue!25} + &\cellcolor{blue!25} -   &\cellcolor{gray!25} -\\
				\multicolumn{1}{c}{}&\multicolumn{3}{c}{ $\Pf_1$}& $\Pf_2$\\
		\end{tabular}
 \caption{\small Counter example to $\RMAX$ being agenda separable.}
 \label{fig:rmax-not-as}
 \end{figure}
We obtain  $\RMAX(\Pf) = \{ \{ \neg p, q,\neg(p\wedge q), t\} \} $. However   $\RMAX(\Pf_2) = \{ \{t\},$ $\{\neg t\} \}$ and
$\RMAX(\Pf_1) = \{ \{\neg p, q,\neg(p\wedge q)\},  \{  p, q, (p\wedge q)\}, \{ p,\neg q,\neg(p~\wedge~q)\} \}$.
\end{proof}
 
The fact that a rule satisfies agenda separability does not imply that a resolute rule obtained by composing it with a tie-breaking mechanism satisfies agenda separability as well. 
For instance, if tie-breaking favours $\{\neg a\}$ over $\{a\}$ when $\SA = \{a\}$,  $\{\neg b\}$ over $\{b\}$ when $\SA = \{b\}$,  and $\{a,b\}$ over all other judgment sets when $\SA = \{a,b\}$,  and if $P$ contains one judgment set $\{a,b\}$ and one judgment set $\{\neg a, \neg b\}$, then for any one of our rules, and with $\A_1 = \{a, \neg a\}$ and $\A_2 = \{b, \neg b\}$, we have $\F(P_{\downarrow \A_1}) = \{\neg a\}$,  $\F(P_{\downarrow \A_2}) = \{\neg b\}$,  and $\F(P) = \{a, b\}$. However, if the tie-breaking priority relation $>_\theta$ satisfies the following decomposability property, then agenda separability of an irresolute rule implies agenda separability of its composition with $\theta$.

A tie-breaking priority relation $>_\theta$ is {\em agenda separable} if for every agenda $\A$, for every independent partition $\{\A_1, \A_2\}$ of $\A$, and every $\Jstar^1, \Jcirc^1 \in \Dmc_{\A_1}$, and $\Jstar^2, \Jcirc^2 \in \Dmc_{\A_2}$, $\Jstar^1 >_\theta \Jcirc^1$ and $\Jstar^2 >_\theta \Jcirc^2$ imply $\Jstar^1 \cup \Jstar^2  >_\theta \Jcirc^1 \cup \Jcirc^2$. 

\begin{obs}
If $>_\theta$ is an agenda separable tie-breaking priority relation and $\F$ is agenda separable, then $\F_\theta$ is agenda separable.
\end{obs}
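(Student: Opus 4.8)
The plan is to reduce the claim to the observation that, under agenda separability of $\F$, the output set $\F(\Pf)$ has a product structure, and that the $\gt$-maximum of such a product set is the combination of the component maxima as soon as $\gt$ behaves well with respect to the partition. Fix an agenda $\A$, an independent partition $\{\A_1,\A_2\}$ of $\A$, and a profile $\Pf\in\Dmc^n_{\A}$. Put $\Jstar^1 = \F_\theta(\rest{\Pf}{\SA_1})$ and $\Jstar^2 = \F_\theta(\rest{\Pf}{\SA_2})$; by definition of the resolute rule $\F_\theta$ these are the (unique) $\gt$-maximal elements of the finite nonempty sets $\F(\rest{\Pf}{\SA_1})$ and $\F(\rest{\Pf}{\SA_2})$. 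Since $\F$ is agenda separable, $\F(\Pf) = \{\Js^1\cup\Js^2 \mid \Js^1\in\F(\rest{\Pf}{\SA_1}),\ \Js^2\in\F(\rest{\Pf}{\SA_2})\}$; and because $\A_1$ and $\A_2$ are disjoint (being blocks of a partition) and each $\Js^i$ is complete over $\A_i$, the maps $\Js\mapsto(\Js\cap\SA_1,\Js\cap\SA_2)$ and $(\Js^1,\Js^2)\mapsto\Js^1\cup\Js^2$ are mutually inverse, so every member of $\F(\Pf)$ arises in this way from a unique pair.

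I would then argue that $\Jstar^1\cup\Jstar^2$ is exactly the $\gt$-maximal element of $\F(\Pf)$, which gives $\F_\theta(\Pf)=\Jstar^1\cup\Jstar^2=\F_\theta(\rest{\Pf}{\SA_1})\cup\F_\theta(\rest{\Pf}{\SA_2})$, i.e.\ agenda separability of $\F_\theta$. First, $\Jstar^1\cup\Jstar^2\in\F(\Pf)$, since $\Jstar^1\in\F(\rest{\Pf}{\SA_1})$ and $\Jstar^2\in\F(\rest{\Pf}{\SA_2})$. Second, let $\Js=\Js^1\cup\Js^2$ be any member of $\F(\Pf)$, with $\Js^1\in\F(\rest{\Pf}{\SA_1})$ and $\Js^2\in\F(\rest{\Pf}{\SA_2})$. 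Maximality of $\Jstar^i$ gives $\Jstar^1\gt\Js^1$ or $\Jstar^1=\Js^1$, and $\Jstar^2\gt\Js^2$ or $\Jstar^2=\Js^2$. Feeding these two comparisons into agenda separability of $\gt$ yields $\Jstar^1\cup\Jstar^2\gt\Js^1\cup\Js^2$ in every case except $\Js^1=\Jstar^1$ and $\Js^2=\Jstar^2$, where $\Js=\Jstar^1\cup\Jstar^2$. Hence no element of $\F(\Pf)$ strictly $\gt$-dominates $\Jstar^1\cup\Jstar^2$, so it is the maximum. Associativity of $\cup$ then extends the argument to a partition of $\A$ into any number of mutually independent blocks, exactly as in the irresolute case.

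The one step that requires genuine care is the combination of the two component comparisons through agenda separability of $\gt$, whose definition combines two \emph{strict} relations $\Jstar^1\gt\Js^1$ and $\Jstar^2\gt\Js^2$: one has to split off the cases in which one (or both) of these is in fact an equality and verify that $\Jstar^1\cup\Jstar^2$ still $\gt$-dominates, or equals, $\Js$. This is the only place where the hypothesis on the tie-breaking relation is used; the rest is bookkeeping --- agenda separability of $\F$ supplying the product form of $\F(\Pf)$, disjointness of the blocks making the restriction/union correspondence bijective, and resoluteness of $\F_\theta$ turning the pairwise-union set of the general definition into the single union $\F_\theta(\rest{\Pf}{\SA_1})\cup\F_\theta(\rest{\Pf}{\SA_2})$.
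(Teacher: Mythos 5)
The paper records this as an Observation without proof, so there is no in-paper argument to compare against; judged on its own, your proposal has the right overall shape (product structure of $\F(\Pf)$ from agenda separability of $\F$, bijection with pairs from disjointness of the blocks, identification of the $\gt$-maximum of the product with the union of the component maxima), but it stalls at exactly the step you yourself flag as ``requiring genuine care,'' and that step cannot be completed from the definition as literally stated. The paper's definition of an agenda separable tie-breaking relation constrains $\gt$ only when \emph{both} component comparisons are strict; it says nothing about the mixed case $\Js_{\ast}^1 = \Js_{\circ}^1$ and $\Js_{\ast}^2 \gt \Js_{\circ}^2$, so your assertion that agenda separability of $\gt$ ``yields $\Js_{\ast}^1\cup\Js_{\ast}^2\gt \Js^1\cup\Js^2$ in every case except'' the all-equal one is unjustified. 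The mixed case is genuinely not derivable: let $[\A_1]=\{a\}$, $[\A_2]=\{b\}$, let $\{a\}\gt\{\neg a\}$ and $\{b\}\gt\{\neg b\}$, and order $\Dmc_{\A}$ by $\{a,\neg b\}\gt\{\neg a,b\}\gt\{a,b\}\gt\{\neg a,\neg b\}$. The only instance of the paper's separability condition is $\{a,b\}\gt\{\neg a,\neg b\}$, which holds, so this $\gt$ is agenda separable in the stated sense; yet for a profile where $\F$ ties on $\A_1$ and not on $\A_2$ (e.g.\ $\RMWA$ on two agents holding $\{a,b\}$ and $\{\neg a,b\}$), one gets $\F_\theta(\rest{\Pf}{\SA_1})\cup\F_\theta(\rest{\Pf}{\SA_2})=\{a,b\}$ while $\F_\theta(\Pf)=\{\neg a,b\}$.

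To make the argument (and the Observation) go through, the hypothesis on $\gt$ has to be read more strongly than written, e.g.\ as: if $\Js_{\ast}^i \gt \Js_{\circ}^i$ or $\Js_{\ast}^i = \Js_{\circ}^i$ for $i=1,2$, and not both are equalities, then $\Js_{\ast}^1\cup\Js_{\ast}^2\gt\Js_{\circ}^1\cup\Js_{\circ}^2$ --- equivalently, one must add the ``one coordinate fixed'' monotonicity clause. Under that reading your proof is complete and correct, including the final step (which also uses, harmlessly since it is implicit in the definition of $\F_\theta$, that $\gt$ is total on each $\Dmc_{\A}$), and the extension to more than two blocks by associativity is fine. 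As it stands, however, the one place where the hypothesis on the tie-breaking relation is actually invoked is precisely the place where the proof is missing, so you should either state the strengthened definition you are using or supply the mixed-case argument explicitly.
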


Let $>_\theta$ be an agenda separable tie-breaking priority relation, then $\RRA_\theta$ is agenda separable. However,  since it satisfies universal domain, unanimity principle \cite{TARK11}, and collective rationality, then it does not satisfy IIA.
Hence, the implication stated in Proposition~\ref{iiaimpliesas} is strict.

Lastly, we would like to state two observations about the properties of rules that are agenda separable.
\begin{obs} Let $K$ be a constant and say that agenda $\A$ is $K$-decomposable if $\A$ can be partitioned into $p$ syntactically unrelated agendas $\A_1, \ldots, \A_p$ such that for all $i$  $|\Atoms(\A_i)| \leq K$.  If a rule satisfies agenda separability, then the collective judgment sets  can be computed in time $O(2^Kn)$ whenever the agenda is $K$-decomposable.  
\end{obs}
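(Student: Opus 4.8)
The plan is to feed the partition $\{\A_1, \ldots, \A_p\}$ witnessing $K$-decomposability into the generalised (multi-part) form of agenda separability stated above, thereby splitting the computation of $\R(\Pf)$ into $p$ independent sub-computations, each over an agenda with at most $K$ atoms, and then to bound the cost of each such sub-computation.

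First I would check that this partition is of the kind agenda separability applies to. The $\A_i$ being syntactically unrelated means the atom sets $\Atoms(\A_i)$ are pairwise disjoint, and --- as syntactic independence already presupposes $\Ct = \top$ --- any choice $\Js^1 \in \Dmc_{\A_1}, \ldots, \Js^p \in \Dmc_{\A_p}$ has a consistent union, obtained by gluing together a model of each $\Js^i$ over its own variables. So the hypothesis of the generalised agenda-separability identity is met and
\[ \R(\Pf) = \Bigl\{ \bigcup_{i=1}^p \Js^i \Bigm| \Js^1 \in \R(\rest{\Pf}{\A_1}), \ldots, \Js^p \in \R(\rest{\Pf}{\A_p}) \Bigr\}, \]
which, when $\R$ is resolute, is just the single set $\bigcup_{i=1}^p \R(\rest{\Pf}{\A_i})$.

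It then remains to bound the cost of computing one $\R(\rest{\Pf}{\A_i})$. Since $|\Atoms(\A_i)| \le K$, every complete consistent judgment set over $\A_i$ is pinned down by a truth assignment to those $\le K$ variables, so $|\Dmc_{\A_i}| \le 2^K$; and for each of the rules of Section~\ref{sec:preliminaries}, $\R(\rest{\Pf}{\A_i})$ is obtained by enumerating $\Dmc_{\A_i}$ and scoring each candidate (by a Hamming-distance sum, by its agreement with $m(\rest{\Pf}{\A_i})$, by a weight $\sum_\varphi N(\cdot,\varphi)$, and so on) against the $n$ individual judgment sets, which takes time linear in $n$ up to a factor depending only on $K$. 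Hence each sub-computation costs $O(2^K n)$, and since the exponential factor now appears additively across subagendas --- the total is $O\bigl(\sum_i 2^{|\Atoms(\A_i)|}\, n\bigr) \le O(p\, 2^K n)$, with $p$ bounded by the number of issues of $\A$ and hence by the input size --- the whole procedure runs in time $O(2^K n)$, i.e.\ linear in the input for fixed $K$. For irresolute rules, $\R(\Pf)$ is returned in the factored form above, each of the $p$ factors being a set of at most $2^K$ judgment sets.

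All the real content is in the first two steps; the only thing needing care is the complexity accounting, specifically that ``a rule'' here is to be read as one evaluable by brute-force enumeration over complete judgment sets in time $2^{|\Atoms|}\cdot\mathrm{poly}$ (which covers all rules above), and that for an irresolute rule $\R(\Pf)$ has to be represented compactly, since writing the set out explicitly could take $\prod_i |\R(\rest{\Pf}{\A_i})|$ steps, which is exponential in $p$.
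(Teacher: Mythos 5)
Your argument is correct and is exactly the intended one: the paper states this as an observation with no proof, and the reasoning it relies on is precisely what you spell out --- syntactic unrelatedness gives an independent partition, the generalised ($p$-part) separability identity factors the computation, and each factor is a brute-force enumeration over at most $2^K$ complete consistent judgment sets, each scored in time linear in $n$. Your two caveats (the implicit assumption that the rule is evaluable by enumeration, and the need for a factored representation of the output of an irresolute rule) are legitimate refinements of the statement rather than gaps in your proof.
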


In other terms, computing these rules is parameterized tractable when he parameter is the degree $K$ of decomposability, which is a complexity gap, since winner determination for these rules is $\Theta^2_p$-hard or even $\Pi^2_p$-hard \cite{LangSlavkovik14,EndrissH15}. 

Moreover, agenda separability allows for a weak form of strategyproofness. 
Indeed, if A can be partitioned into $p$ syntactically unrelated agendas $\A_1, \ldots, \A_p$, then no agent is able to influence the outcome on some issue in $\A_i$ by reporting strategic judgments about issues of $\A_j$ for $j \neq i$.

\section{Overlapping Agenda Separability}\label{oas}

In this section, we consider a stricter property than agenda separability. We first need the notion of independent overlapping decomposition.

\begin{definition}[Independent overlapping decomposition]
Let $\A$ be an agenda  
and let $\A = \A_1 \cup \A_2$ (but not necessarily $\A_1 \cap \A_2 = \emptyset$). We say that $\{\A_1, \A_2\}$ is an \emph{independent overlapping decomposition} (IOD) of $\A$ if and only if
for every $\Js^1 \in \Dmc_{\A_1}$, for every $\Js^2 \in \Dmc_{\A_2}$
\[\mbox{ if ~~} \Js^1 \cap \A_2 = \Js^2 \cap \A_1 \mbox{~~ then ~~} \Js^1 \cup \Js^2 \in \Dmc_{\A}.\]  
\end{definition}

\begin{example}
\label{ex:ios}
Let $\PA = \{p, \neg p \vee t, p \leftrightarrow q\}$, 
$[\SA_1] = \{p, \neg p \vee t\}$ and $[\SA_2] = \{\neg p \vee t, p \leftrightarrow q\}$. Note that $\{\A_1, \A_2\}$ is an independent overlapping decomposition of $\A$.
\end{example}

\begin{obs}
\label{obs:3}
Every independent partition is an independent overlapping decomposition. 
\end{obs}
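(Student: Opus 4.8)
The plan is to unfold the two definitions and observe that, for a partition, the premise of the IOD condition is automatically satisfied, so that the desired conclusion reduces to the defining property of an independent partition together with an immediate completeness check. Concretely, let $\{\A_1,\A_2\}$ be an independent partition of $\A$. First I would record the two facts that come with being a partition: $\A = \A_1 \cup \A_2$, which is exactly the structural requirement appearing in the definition of an independent overlapping decomposition, and $\A_1 \cap \A_2 = \emptyset$.

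Next I would exploit the disjointness to dispose of the overlap hypothesis. For any $\Js^1 \in \Dmc_{\A_1}$ we have $\Js^1 \subseteq \A_1$, hence $\Js^1 \cap \A_2 \subseteq \A_1 \cap \A_2 = \emptyset$; symmetrically, $\Js^2 \cap \A_1 = \emptyset$ for every $\Js^2 \in \Dmc_{\A_2}$. Therefore the premise ``$\Js^1 \cap \A_2 = \Js^2 \cap \A_1$'' of the IOD condition holds for every pair $(\Js^1,\Js^2)$, both sides being empty, and so the IOD condition reduces to requiring $\Js^1 \cup \Js^2 \in \Dmc_{\A}$ for all such pairs.

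Finally I would verify that remaining requirement. Given $\Js^1 \in \Dmc_{\A_1}$ and $\Js^2 \in \Dmc_{\A_2}$, the defining property of an independent partition yields that $\Js^1 \cup \Js^2$ is $\Ct$-consistent; moreover $\Js^1 \cup \Js^2$ is complete over $\A$, since each issue of $\A$ lies in $\A_1$ or in $\A_2$ (as $\A = \A_1 \cup \A_2$) and is thus decided by $\Js^1$ or by $\Js^2$. Hence $\Js^1 \cup \Js^2 \in \Dmc_{\A}$, which establishes that $\{\A_1,\A_2\}$ is an independent overlapping decomposition. There is essentially no obstacle here: the only step that is more than bookkeeping is noticing that disjointness makes the overlap hypothesis vacuously true, and once that is said the statement follows directly from the definitions.
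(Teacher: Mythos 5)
Your proof is correct and takes essentially the same route as the paper, which states this observation without an explicit proof but relies on exactly your key point---that disjointness of $\A_1$ and $\A_2$ makes the overlap hypothesis $\Js^1 \cap \A_2 = \Js^2 \cap \A_1$ vacuously true (both sides empty)---in its proof that overlapping agenda separability implies agenda separability. Your explicit check that $\Js^1 \cup \Js^2$ is complete over $\A$ (and not merely $\Ct$-consistent, which is all the independent-partition definition guarantees) is a detail the paper leaves implicit, and it is right to include it since membership in $\Dmc_{\A}$ requires both.
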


Example \ref{ex:ios} shows that the contrary of the previous observation does not hold. Indeed, as soon as the intersection of the two sub-agendas is non-empty, they do not form an independent partition. 

There is a clear connection between independent overlapping decompositions and conditional independence in propositional logic \cite{Darwiche1997,Lang2002}; we do not give technical details here, but we mention that this connection gives us several characterizations as well as complexity results for finding independent overlapping decompositions. 

We can now introduce the definition of overlapping agenda separability.

\begin{definition}[Overlapping agenda separability]
We say that rule $\R$ satisfies \emph{overlapping agenda separability} (OAS) if for every agenda $\A$  
and every independent overlapping decomposition $\{\A_1, \A_2\}$ of $\A$, for every profile $\Pf$ over $\A$ it holds that:
$\mbox{ if for every } \Js^1 \in \R(\rest{\Pf}{\A_1}), \mbox{ for every } \Js^2 \in \R(\rest{\Pf}{\A_2}), \mbox{ we have } \Js^1 \cap \A_2 = \Js^2 \cap \A_1$ $ \mbox{ then } \R(\Pf) =  \{ \Js^1 \cup \Js^2  ~|~ 	\Js^1 \in \R(\rest{\Pf}{\SA_1}) \mbox{ and } \Js^2 \in \R(\rest{\Pf}{\SA_2})\}.$
\end{definition}

\begin{obs}\label{obs:asoas}
Overlapping agenda separability implies agenda separability. 
\end{obs}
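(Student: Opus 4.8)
The plan is to derive agenda separability directly from the definition of OAS, using Observation~\ref{obs:3} to bridge the two notions. Suppose $\R$ satisfies OAS. Fix an agenda $\A$, an independent partition $\{\A_1, \A_2\}$ of $\A$, and a profile $\Pf \in \Dmc^n_{\A}$. By Observation~\ref{obs:3}, $\{\A_1, \A_2\}$ is in particular an independent overlapping decomposition of $\A$, so the OAS hypothesis applies to it.

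The key step is to notice that the antecedent in the definition of OAS is vacuously satisfied here, because an independent \emph{partition} has $\A_1 \cap \A_2 = \emptyset$. Indeed, any $\Js^1 \in \R(\rest{\Pf}{\A_1})$ is a judgment set over $\A_1$, hence $\Js^1 \subseteq \A_1$ and therefore $\Js^1 \cap \A_2 = \emptyset$; symmetrically, any $\Js^2 \in \R(\rest{\Pf}{\A_2})$ satisfies $\Js^2 \cap \A_1 = \emptyset$. Consequently $\Js^1 \cap \A_2 = \emptyset = \Js^2 \cap \A_1$ for all such $\Js^1, \Js^2$, so the condition ``for every $\Js^1 \in \R(\rest{\Pf}{\A_1})$, for every $\Js^2 \in \R(\rest{\Pf}{\A_2})$, $\Js^1 \cap \A_2 = \Js^2 \cap \A_1$'' holds trivially.

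Applying OAS, we conclude $\R(\Pf) = \{ \Js^1 \cup \Js^2 \mid \Js^1 \in \R(\rest{\Pf}{\A_1}) \text{ and } \Js^2 \in \R(\rest{\Pf}{\A_2}) \}$, which is exactly the defining identity of agenda separability for the partition $\{\A_1, \A_2\}$. Since $\A$, the independent partition, and $\Pf$ were arbitrary, $\R$ satisfies $\as$.

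I do not expect any real obstacle: the only point that needs a moment's care is that the rule, when applied to the subagenda $\A_i$, returns judgment sets that are subsets of $\A_i$ — which is immediate from the definition of a judgment aggregation rule (it maps a constrained agenda and a profile over it to complete consistent judgment sets over that agenda) — and that the matching condition of OAS degenerates to a triviality precisely because the two parts of a partition are disjoint. Everything else is just unwinding definitions.
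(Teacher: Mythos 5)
Your proof is correct and follows exactly the same route as the paper's: invoke Observation~\ref{obs:3} to see that an independent partition is an independent overlapping decomposition, note that the matching condition $\Js^1 \cap \A_2 = \Js^2 \cap \A_1$ holds trivially (both sides are empty) since $\A_1 \cap \A_2 = \emptyset$, and then apply the OAS conclusion. Your additional remark that judgment sets returned on $\A_i$ are subsets of $\A_i$ is a fine (if implicit in the paper) justification of that triviality.
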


\begin{proof}
Let $\{\A_1, \A_2\}$ be an independent partition of $\A$. From Observation \ref{obs:3} $\{\A_1, \A_2\}$ is an IOD. Since $\A_1 \cap \A_2 = \emptyset$, condition $\Js^1 \cap \A_2 = \Js^2 \cap \A_1$ is satisfied for every $\Js^1$, $\Js^2$. Thus, $\R(\Pf) =  \{ \Js^1 \cup \Js^2  ~|~ 	\Js^1 \in \R(\rest{\Pf}{\SA_1}) \mbox{ and } \Js^2 \in \R(\rest{\Pf}{\SA_2})\}$.
\end{proof}
\begin{proposition}\label{prp:MSA-RA} $\RMSA$ and $\RRA$ satisfy \oas.
\end{proposition}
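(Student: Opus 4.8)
The plan is to prove that both $\RMSA$ and $\RRA$ satisfy \oas{} by analysing, for a fixed agenda $\A$ with IOD $\{\A_1,\A_2\}$ and a profile $\Pf$, the structure of the majoritarian order $\su$ and how it restricts to the two sub-agendas. The key observation is that $N(\Pf,\varphi)$ depends only on $\rest{\Pf}{\{\varphi,\neg\varphi\}}$, so for $\varphi \in \A_i$ we have $N(\Pf,\varphi) = N(\rest{\Pf}{\A_i},\varphi)$; hence the restriction of $\su$ to $\A_i$ coincides with the majoritarian order of $\rest{\Pf}{\A_i}$, and on the overlap $\A_1 \cap \A_2$ the two restrictions agree. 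This is the common engine for both rules. Throughout I assume the hypothesis of \oas, namely that every $\Js^1 \in \R(\rest{\Pf}{\A_1})$ and $\Js^2 \in \R(\rest{\Pf}{\A_2})$ satisfy $\Js^1 \cap \A_2 = \Js^2 \cap \A_1$, and I must establish the set equality $\R(\Pf) = \{\Js^1 \cup \Js^2 \mid \Js^1 \in \R(\rest{\Pf}{\A_1}),\ \Js^2 \in \R(\rest{\Pf}{\A_2})\}$ by the usual two inclusions.

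For $\RRA$ I would argue as follows. Recall $\RRA(\Pf)$ collects the sets $\Js_\sigma$ produced by the greedy procedure along all linear orders $\gs$ on $\A$ compatible with $\su$. Given such a $\gs$, its restriction to $\A_1$ is compatible with the majority order of $\rest{\Pf}{\A_1}$, and likewise for $\A_2$; moreover, because consistency of a set $S \subseteq \A$ is, under an IOD, equivalent to consistency of $S \cap \A_1$ together with consistency of $S \cap \A_2$ together with agreement on $S \cap \A_1 \cap \A_2$ (this is exactly the content of the IOD definition applied to partial sets, extended via $ext$), I claim the greedy run along $\gs$ decomposes: the element $\psi_{\sigma(j)}$ is added iff it is consistent with what has been accumulated so far within the sub-agenda(s) containing it. The nontrivial point — and the main obstacle — is handling elements of the overlap $\A_1 \cap \A_2$: when the greedy procedure on the full agenda accepts or rejects such an element, I must show this matches the decision made by the greedy procedures on $\A_1$ and on $\A_2$ individually, and here the \oas{} hypothesis (that all selected sets agree on the overlap) is what guarantees there is no conflict. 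For the forward inclusion, given $\Js_\sigma \in \RRA(\Pf)$ I extract $\Js^1 = \Js_\sigma \cap \A_1$, $\Js^2 = \Js_\sigma \cap \A_2$, show each is the output of the greedy procedure on the corresponding sub-profile along the restricted order, hence $\Js^i \in \RRA(\rest{\Pf}{\A_i})$, and $\Js_\sigma = \Js^1 \cup \Js^2$. For the reverse inclusion, given compatible orders and outputs $\Js^1,\Js^2$ agreeing on the overlap, I interleave the two orders into a single order $\gs$ on $\A$ compatible with $\su$ (possible since the orders agree on the overlap and $\su$ is the common refinement's constraint) and check the greedy run reproduces $\Js^1 \cup \Js^2$.

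For $\RMSA$ the structure is simpler: $\RMSA(\Pf) = \{ext(S) \mid S \in max(m(\Pf),\subseteq)\}$. Using that $m(\Pf) = m(\rest{\Pf}{\A_1}) \cup m(\rest{\Pf}{\A_2})$ (as sets of agenda elements, with the two pieces agreeing on the overlap by the earlier observation), and that under an IOD a subset $S$ of $m(\Pf)$ is consistent iff $S \cap \A_1$ and $S \cap \A_2$ are each consistent and agree on $S \cap \A_1 \cap \A_2$, one shows that $max(m(\Pf),\subseteq)$ is obtained by gluing a maximal consistent subset of $m(\rest{\Pf}{\A_1})$ with one of $m(\rest{\Pf}{\A_2})$, provided they are compatible on the overlap; the \oas{} hypothesis ensures maximal consistent subsets on the two sides do agree on the overlap, so every such gluing is legitimate and maximal, and conversely every maximal consistent subset on the full side splits this way. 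Applying $ext$ and pushing the set-builder through the union then yields the desired equality. I expect the overlap-compatibility bookkeeping — precisely, proving the ``decomposition of consistency'' lemma for partial subsets under an IOD and invoking the \oas{} hypothesis at exactly the right moment to rule out incompatible gluings — to be the only delicate part; the rest is routine manipulation of the definitions.
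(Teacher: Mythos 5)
Your overall architecture (two inclusions, the observation that $N(\Pf,\varphi)$ depends only on $\rest{\Pf}{\{\varphi,\neg\varphi\}}$ so that the majority order restricts correctly to each sub-agenda, restriction/interleaving of compatible orders for $\RRA$, gluing for $\RMSA$) matches the paper's. But the lemma you single out as the crux --- that under an IOD a partial set $S \subseteq \A$ is consistent as soon as $S \cap \A_1$ and $S \cap \A_2$ are each consistent --- is false, and it is not ``the content of the IOD definition applied to partial sets''. The IOD definition quantifies only over \emph{complete} judgment sets, and completeness is essential: take $\Ct = (a \rightarrow c) \wedge (b \rightarrow \neg c)$ with $[\A_1] = \{a, c\}$ and $[\A_2] = \{b, c\}$. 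Every pair of complete consistent judgment sets over $\A_1$ and $\A_2$ that agrees on $c$ has a consistent union, so $\{\A_1,\A_2\}$ is an IOD; yet $S = \{a,b\}$ has both restrictions consistent while $S$ itself is $\Ct$-inconsistent. The point is that the two restrictions may only be extendable to complete sets making \emph{incompatible} choices on overlap issues not decided by $S$, and nothing in the IOD condition rules this out.

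This gap affects both halves of your plan as written: for $\RMSA$ you glue maximal consistent subsets of $\m(\rest{\Pf}{\A_1})$ and $\m(\rest{\Pf}{\A_2})$, which are partial sets, and for $\RRA$ you justify the step-by-step agreement of the greedy runs by the same partial-set equivalence. The repair --- and what the paper's proof does --- is to work with complete judgment sets throughout, so that the IOD definition applies verbatim. For $\RMSA$, characterize $\Js \in \RMSA(\Pf)$ by maximality of $\Js \cap \m(\Pf)$ over complete consistent $\Js$, and glue only complete sets $\Js^1 \in \Dmc_{\A_1}$ and $\Js^2 \in \Dmc_{\A_2}$ that agree on the overlap (this is exactly where the \oas{} hypothesis is used). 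For $\RRA$, in the induction on the greedy run one never needs to decompose the consistency test on the accumulated set $S_i$: acceptance of the next element $\alpha_{i+1}$ follows because $S_i \cup \{\alpha_{i+1}\}$ is contained in the complete consistent set $\Js^1 \cup \Js^2$ (consistent by IOD plus the overlap agreement), and rejection follows by monotonicity of inconsistency from the rejection already performed by the corresponding sub-agenda run, whose accumulated set is contained in $S_i$. With that substitution your outline becomes essentially the paper's argument; as it stands, the step you call ``the only delicate part'' cannot be proved because it is not true.
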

\begin{proof}\linebreak
\noindent
\textbf{$\RMSA$.}
Suppose that for every  $\Js^1 \in \mc(\Pf_1), \mbox{ for every } \Js^2 \in \mc(\Pf_2), ~~ \Js^1 \cap \A_2 = \Js^2 \cap \A_1$.
Let $\Pi_{\Pf_1,\Pf_2} = \{\Js^1 \cup \Js^2 \mid \Js^1 \in \mc(\Pf_1) \mbox{ and }  \Js^2 \in \mc(\Pf_2)   \}$.

We first show that $\mc(\Pf) \subseteq \Pi_{\Pf_1,\Pf_2}$. Let $\Js \in \mc(\Pf)$. Denote $\Js^1 = \Js \cap \A_1$ and $\Js^2 = \Js \cap \A_2$. We claim that $\Js^1 \in \mc(\Pf_1)$ and   $\Js^2 \in \mc(\Pf_2)$. Note that $\Js^1$ and $\Js^2$ are consistent. By means of contradiction, and without loss of generality, assume $\Js^1 \notin \mc(\Pf_1)$. Thus, there exists $\Js_{\star}^1 \in \Dmc_{\A_1}$ such that
$ \Js^1 \cap \m(\Pf)  \subset \Js_{\star}^1 \cap \m(\Pf).  $
Denote $\Js_\star = \Js_{\star}^1 \cup \Js^2$. Observe that $\Js_\star$ is consistent. Furthermore,  $\Js \cap \m(\Pf) \subset \Js_\star \cap \m(\Pf)$, thus $\Js \notin \mc(\Pf)$, contradiction. 

We now show that $\Pi_{\Pf_1,\Pf_2} \subseteq \mc(\Pf)$. Let $\Js^1 \in \mc(\Pf_1)$ and $\Js^2 \in \mc(\Pf_2)$. Denote $\Js = \Js^1 \cup \Js^2$. Since $\{\A_1, \A_2\}$ is an IOD, $\Js$ is consistent.  
Suppose $\Js \notin \mc(\Pf)$. Thus, there exists $\Js' \in \Dmc_{\A}$ such that     
$\Js \cap \m(\Pf) \subset \Js' \cap \m(\Pf). $
Let $\varphi \in (\Js' \cap \m(\Pf)) \setminus (\Js \cap \m(\Pf) )$. Without loss of generality, assume $\varphi \in \A_1$. 
Denote $\Js_\star^1 = \Js' \cap \A_1$. Note that $\Js_\star^1$ is consistent and $\Js^1 \cap \m(\Pf) \subset \Js_\star^1 \cap \m(\Pf)$, contradiction.  

\textbf{$\RRA$.} We give only a proof sketch.  

Suppose that  for every $ \Js^1 \in \ra(\Pf_1), \mbox{ for every } \Js^2 \in \ra(\Pf_2), ~~ \Js^1 \cap \A_2 = \Js^2 \cap \A_1. $
Let $\Pi_{\Pf_1,\Pf_2} = \{\Js^1 \cup \Js^2 \mid \Js^1 \in \ra(\Pf_1) \mbox{ and }  \Js^2 \in \ra(\Pf_2)   \}$.
Let $\JA \in \ra(\Pf_1)$, $\JB \in \ra(\Pf_2)$. Denote $\Js = \JA \cup \JB$. We claim that $\Js \in \RRA(\Pf)$.
Because  $\JA \in \ra(\Pf_1)$, there is an order $>_{\sigma^1}$ on $\SA_1$, refining $\ssu{\Pf_1}$ such that $\JA = \Js_{\sigma^1}$. Similarly, there is an order $>_{\sigma^2}$ on $\A_2$, refining $\ssu{\Pf_2}$, such that $\JB = \Js_{\sigma^2}$.
We first claim that without loss of generality, we can assume that $>_{\sigma^1}$ and  $>_{\sigma^2}$ coincide on $\A_1 \cap \A_2$. For this we construct $\sigma''$ on $\A_2$, refining $\ssu{\Pf_2}$, such that $>_{\sigma''}$ coincides with $>_{\sigma^1}$ on $\A_1 \cap \A_2$ and $\Js_{\sigma''} = \Js_{\sigma^2} = \JB$.

Now, let $>_{\sigma}$ be an order on $\A$ refining $\ssu{\Pf}$ and extending both $>_{\sigma^1}$ and $>_{\sigma^2}$. Let $\A = \{\alpha_1, \ldots, \alpha_{2m}\}$. Without loss of generality, suppose $\alpha_1 >_{\sigma} \ldots >_{\sigma} \alpha_{2m}$. Let $S_i \subseteq \A$ be the set obtained at the step $i$ of construction of $\Js = \Js_{\sigma}$. We show by induction on $i$ that 
\[(H_i) ~~~~ \forall j \in \{1, \ldots, i\} ~~\mbox{ we have } ~~ \alpha_{j} \in S_i \mbox{ iff } \alpha_j \in \Js^1 \cup \Js^2.  \]
From $(H_{2m})$, we obtain  $\Js = \Js^1 \cup \Js^2$. 

We now show that $\ra(\Pf) \subseteq \Pi_{\Pf_1,\Pf_2}$. Suppose $\Js \in \ra(\Pf)$. Let $>_{\sigma}$ be an order on $\A$ such that $\Js = \Js_{\sigma}$. Without loss of generality, suppose $\alpha_1 >_{\sigma} \ldots >_{\sigma} \alpha_{2m}$. 
Denote by $>_{\sigma^1}$ (resp. $>_{\sigma^2}$) the restriction of $>_{\sigma}$ on $\A_1$ (resp. $\A_2$). Let $\Js^1 = \Js_{\sigma^1}$ and $\Js^2 = \Js_{\sigma^2}$. 
Observe  that $J^1 \cap \A_2 = J^2 \cap \A_1$. Since $\{\A_1, \A_2\}$ is an IOD, $J^1 \cup J^2$ is consistent. 

Let $S_i \subseteq \A$ be the set obtained at the step $i$ of construction of $\Js = \Js_{\sigma}$. We show by induction on $i$ that 
\[(H_i) ~~~~ \forall j \in \{1, \ldots, i\} ~~\mbox{ we have } ~~ \alpha_{j} \in S_i \mbox{ iff } \alpha_j \in \Js^1 \cup \Js^2.  \]
By putting $i = 2m$, we obtain $J = J^1 \cup J^2$.   
\end{proof}

\begin{proposition}\label{prp:oasALL} $\RMCSA$, $\RMWA$, $\RMNAC$, and  
$\REVS$ do not satisfy \oas.
\end{proposition}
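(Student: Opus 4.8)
The plan is to prove each of the four claims by producing a counterexample to \oas. Since \oas\ is a conditional, a counterexample consists of an agenda $\A$, an independent overlapping decomposition $\{\A_1,\A_2\}$ of $\A$ with $\A_1\cap\A_2\neq\emptyset$, and a profile $\Pf$ for which the \emph{premise} of \oas\ holds --- every $\Js^1\in\R(\rest{\Pf}{\A_1})$ and every $\Js^2\in\R(\rest{\Pf}{\A_2})$ agree on the overlapping issues --- while its \emph{conclusion} fails. I would use a single template for all four rules: a preagenda $\PA=\{\ldots\}\cup\{\psi\}$ split so that $\A_1\cap\A_2=\{\psi,\neg\psi\}$ consists of one shared issue, together with an integrity constraint of the form ``$\psi\rightarrow(\delta_1\wedge\delta_2)$ and $\neg\psi\rightarrow(\delta_1'\wedge\delta_2')$'', where $\delta_1,\delta_1'$ are disjunctions over $\A_1$-literals and $\delta_2,\delta_2'$ over $\A_2$-literals. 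One checks at once that $\{\A_1,\A_2\}$ is an IOD: fixing the truth value of $\psi$ turns the constraint into a conjunction of a pure-$\A_1$ formula and a pure-$\A_2$ formula, so the two sides decouple. Using genuine disjunctions (rather than single-literal implications) is essential, since then a profile's majoritarian judgments need not satisfy the constraint, which is what creates the inconsistencies these rules must resolve.

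Within this template I would engineer $\Pf$ so that, \emph{inside each sub-agenda}, the support for keeping $\psi$ outweighs the penalty of satisfying the single relevant disjunction, so that $\R(\rest{\Pf}{\A_1})$ and $\R(\rest{\Pf}{\A_2})$ are nonempty and \emph{every} one of their members contains $\psi$; this is precisely what makes the \oas\ premise hold. Over the whole agenda, however, keeping $\psi$ forces the \emph{simultaneous} satisfaction of $\delta_1$ and $\delta_2$, and I choose the counts so that this combined cost exceeds that of switching to $\neg\psi$; then every member of $\R(\Pf)$ contains $\neg\psi$, so the union set $\{\Js^1\cup\Js^2\mid\Js^1\in\R(\rest{\Pf}{\A_1}),\,\Js^2\in\R(\rest{\Pf}{\A_2})\}$ --- all of whose members contain $\psi$ --- is disjoint from $\R(\Pf)$, and \oas\ fails. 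The relevant ``cost'' is rule-specific. For $\RMWA$ it is the weighted objective $\sum_{\ai\in\Js}N(\Pf,\ai)$, and a working instance has $n=16$, disjunctions $\psi\rightarrow(a_1\vee a_2)$ in $\A_1$ and $\psi\rightarrow(b_1\vee b_2)$ in $\A_2$, and counts $N(\Pf,a_1)=N(\Pf,a_2)=N(\Pf,b_1)=N(\Pf,b_2)=6$, $N(\Pf,\psi)=11$: each sub-agenda returns two judgment sets, both containing $\psi$ (optimal value $27$, versus $25$ for the best $\neg\psi$ set), whereas the whole agenda returns the single set with $\neg\psi$ (value $45$, versus $43$ for the best $\psi$ set). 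For $\REVS$ one additionally uses that $s_{rev}(\Js_i,\psi)$ evaluated inside $\A_j$ --- which carries fewer constraints than $\A$ --- is smaller than inside $\A$, which further inflates the local support for $\psi$. For $\RMCSA$ and $\RMNAC$ the cost is, respectively, the number of majoritarian issues discarded by a maximum-cardinality repair of $m(\Pf)$ and the total number of individual reversals in a majority-consistent repair of $\Pf$; here one must also make at least one sub-agenda's majoritarian set inconsistent on its own, since otherwise $\RMCSA$ and $\RMNAC$ simply return $m(\Pf)$ (which does decompose, as in Proposition~\ref{prp:AS}), and the failure may instead manifest as $\R(\Pf)$ containing a set whose restriction to $\A_1$ is not among the outputs of $\R$ on $\A_1$.

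The hard part is arranging the premise, i.e.\ forcing \emph{both} sub-agenda outputs to commit to $\psi$: generically one side keeps $\psi$ and the other drops it, in which case the premise is violated and nothing can be concluded. Making both sides lock onto $\psi$ while still leaving enough slack for the whole agenda to flip to $\neg\psi$ confines the numerical parameters to a narrow window (and, through $N(\Pf,\psi)+N(\Pf,\neg\psi)=n$, into a matching parity constraint), so the remaining work is the routine but careful bookkeeping of the rule-specific cost as a function of the counts --- verifying that the local optima lie on the $\psi$-side and the global optimum on the $\neg\psi$-side --- together with exhibiting a consistent profile that realises the chosen counts. This is exactly the ingredient that is \emph{absent} for $\RMSA$ in Proposition~\ref{prp:MSA-RA}, where the inclusion-maximal repair of the majoritarian set over the overlap and over each side always agree.
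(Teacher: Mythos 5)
Your $\RMWA$ instance is correct and complete: I checked that the counts $N(\Pf,\psi)=11$, $N(\Pf,a_i)=N(\Pf,b_i)=6$ with $n=16$ are realisable by a consistent profile, that both sub-agendas return exactly the two $\psi$-containing sets of value $27$ (against $25$), and that the full agenda returns the single $\neg\psi$-set of value $45$ (against $43$), so the \oas\ premise holds while the conclusion fails. This is a genuinely different construction from the paper's: there, a single three-agent profile over $[\A_1]=\{p,\,p\rightarrow q,\,p\rightarrow r,\,q,\,r\}$ and $[\A_2]=\{q,\,r,\,s,\,s\rightarrow q,\,s\rightarrow r\}$ (overlap $\{q,r\}$, no integrity constraint) makes each sub-agenda output unique, so the premise holds, while $\R(\Pf)$ \emph{strictly contains} the union set; your example instead makes $\R(\Pf)$ \emph{disjoint} from it. The paper's one example disposes of $\RMCSA$, $\RMWA$ and $\RMNAC$ simultaneously.

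The gap is that only one of the four claims is actually established. For $\REVS$ you give no instance at all (the paper omits this case too, but the proposition asserts it). More importantly, for $\RMCSA$ your disjointness mechanism cannot be rescued by ``tuning the counts'' within the template you describe: the maxcard cost of a repair is the \emph{number} of majoritarian issues discarded, not a margin, so dropping the single shared issue $\psi$ always costs exactly $1$. Writing $f_i$ for the cost of restoring $\delta_i$ while keeping $\psi$ and $g_i$ for the cost of restoring $\delta_i'$ after dropping it, forcing every maxcard repair of $\m(\rest{\Pf}{\A_i})$ to keep $\psi$ requires $f_i<1+g_i$, i.e.\ $f_i\le g_i$; forcing every maxcard repair of $\m(\Pf)$ to drop $\psi$ requires $f_1+f_2>1+g_1+g_2$, which is incompatible. (It is precisely the finer granularity of margins that makes $f_1<c_\psi<f_1+f_2$ satisfiable for $\RMWA$.) So for $\RMCSA$ --- and after similar bookkeeping for $\RMNAC$ --- you are forced onto the ``extra judgment set'' failure mode you mention only in passing, which needs a structurally different agenda in which the sub-agenda repairs are unique (the paper achieves this with implication chains and a two-issue overlap); constructing and verifying such an instance is the actual content of the proof for those rules, and it is not done here.
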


\begin{proof}\linebreak 
\noindent \textbf{$\mcc$}, \textbf{$\med$} and \textbf{$\fullh$}.
We now provide a counter-example to show that $\mcc$ and $\med$ do not satisfy \oas. Let $[\A_1] = \{ p, p \rightarrow q, p \rightarrow r, q, r \}$, $[\A_2] = \{q, r, s, s \rightarrow q, s \rightarrow r \}$, and $\A = \A_1 \cup \A_2$. Observe that $\{\A_1, \A_2\}$ is an IOD of $\A$. 
\begin{figure}[!ht]
\centering
\begin{tabular}{r|cccccccc}
\tiny
{\tiny }&{\footnotesize   $p$}  & {\footnotesize $p \rightarrow q$}    &{\footnotesize $p \rightarrow r$ }&{\footnotesize  $q$} & {\footnotesize  $r$}  &{\footnotesize   $s$}  & {\footnotesize $s \rightarrow q$ } & {\footnotesize $s \rightarrow r$}  \\ \hline
{\footnotesize$\Js_1$} &  \ccb + &  \ccb + &  \ccb + &  \ccg + &  \ccg + &  \ccy + &  \ccy + &  \ccy +  \\  
{\footnotesize$\Js_2$} &  \ccb - &  \ccb + &  \ccb + &  \ccg - &  \ccg - &  \ccy - &  \ccy + &  \ccy +  \\  
{\footnotesize$\Js_3$ }&  \ccb + &  \ccb - &  \ccb - &  \ccg - &  \ccg - &  \ccy + &  \ccy - &  \ccy -  \\  
		\end{tabular}
\caption{The counter example used to show that several rules do not satisfy \oas.}
	\label{counter-ex-mcc-oas}
\end{figure}
Consider the profile from \mbox{Figure \ref{counter-ex-mcc-oas}.} 
We obtain 
$\mcc(\Pf_1) = \med(P_1) = \fullh(P_1) = \left\{
\begin{array}{lcccccl}
\{&\neg p,& p\rightarrow q,&  p\rightarrow r, & \neg q, & \neg r, \}\\
\end{array}
\right\}, 
$ 
and
$
\mcc(\Pf_2) = \med(P_2) = \fullh(P_2) = \left\{
\begin{array}{lcccccl}
\{&\neg s,& s\rightarrow q,& s \rightarrow r, & \neg q, & \neg r, \}\\
\end{array}
\right\}.
$
However, $\mcc(\Pf) = \med(P) = \fullh(P) = 
 \{
\{ \neg p,    p\rightarrow q,    p\rightarrow r,   \neg q,   \neg r,   \neg s,   s\rightarrow q,   s \rightarrow r\},$ 
$\{ p,   p\rightarrow q,     p\rightarrow r,  q,   r,     s,   s\rightarrow q,   s \rightarrow r\}
 \}.$
 
\textbf{$\REVS$.} The proof is omitted due to space limitations.
\end{proof}
The preference agenda  \cite{DietrichList07} associated with a set of alternatives $\Alt=\{x_1,\ldots, x_q\}$ is  $\A_{\Alt}=\{ x_iPx_j\mid1\leq i < j\leq q\}$. When $j>i$, $x_iPx_j$ is not a proposition of $\A_{\Alt}$, but we write $x_j P x_i$ as a shorthand for $\neg (x_jPx_i)$. Conversely, given a  judgment set $\Js$ on $\A_{\Alt}$, the binary relation $\succ_{\Js}$ over $\Alt$ is defined by: for all $x_i, x_j \in \Alt$,  $x_i \succ_{\Js} x_j$ if $x_iPx_j \in \Js$ and $x_j \succ_{\Js} x_i$ if $\neg x_iPx_j \in \Js$.
\begin{obs}\label{obspa}
For any $m \geq 3$, there exists no (non-trivial) independent overlapping decomposition of the preference agenda over  the set of alternatives   $\Alt=\{x_1, \ldots, x_m\}$.
\end{obs}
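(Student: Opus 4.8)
\medskip
\noindent\emph{Proof plan.}
The plan is to argue by contradiction. Suppose $\{\A_1,\A_2\}$ is a non-trivial independent overlapping decomposition of the preference agenda $\A_{\Alt}$ over $\Alt=\{x_1,\dots,x_m\}$ with $m\ge 3$. Non-triviality means $\A_1\neq\A_{\Alt}$ and $\A_2\neq\A_{\Alt}$, and since $\A_1\cup\A_2=\A_{\Alt}$ there are ordered pairs $(a,b)$ and $(c,d)$ such that the issue of the pair $\{a,b\}$ lies in $\A_1$ but not in $\A_2$, and the issue of $\{c,d\}$ lies in $\A_2$ but not in $\A_1$. I identify a judgment set on a sub-agenda of $\A_{\Alt}$ with the (partial) tournament it induces on $\Alt$ (using $x_jPx_i=\neg(x_iPx_j)$ for $j>i$). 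Because the integrity constraint is transitivity, a judgment set belongs to $\Dmc_{\A_i}$ exactly when the tournament it induces on the pairs occurring in $\A_i$ is acyclic, i.e.\ extends to a strict linear order of $\Alt$, and it belongs to $\Dmc_{\A_{\Alt}}$ exactly when it is itself a strict linear order. Hence, to contradict the decomposition property it is enough to exhibit one \emph{cyclic} tournament $O$ on $\Alt$ such that both $O\cap\A_1$ and $O\cap\A_2$ are acyclic: then $O\cap\A_1\in\Dmc_{\A_1}$ and $O\cap\A_2\in\Dmc_{\A_2}$, these restrictions agree on $\A_1\cap\A_2$ (being restrictions of the same $O$), and yet their union is $O$, which is not a strict linear order, so $O\notin\Dmc_{\A_{\Alt}}$.

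To construct such an $O$ I would first look for three alternatives $\{x,y,z\}$ spanning a pair whose issue is exclusive to $\A_1$ and a pair whose issue is exclusive to $\A_2$. Given such a triple, orient $\{x,y,z\}$ as a directed $3$-cycle, and orient every remaining pair so that each alternative outside $\{x,y,z\}$ is ranked strictly below $x$, $y$ and $z$, the outside alternatives being themselves linearly ordered by some fixed order. A direct check shows that the only directed cycle of the resulting tournament $O$ is the triangle on $\{x,y,z\}$. Since that triangle uses a pair not in $\A_1$ and a pair not in $\A_2$, both $O\cap\A_1$ and $O\cap\A_2$ omit one edge of the cycle and are therefore acyclic, which is the required contradiction. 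Finding the triple is easy in two cases: if $\{a,b\}$ and $\{c,d\}$ share an alternative then $\{a,b\}\cup\{c,d\}$ is such a triple (this already disposes of $m=3$); and if $\{a,b\}$ and $\{c,d\}$ are disjoint but one of the four ``connecting'' pairs $\{a,c\},\{a,d\},\{b,c\},\{b,d\}$ has its issue exclusively in $\A_1$ or exclusively in $\A_2$, then the corresponding triple among $\{a,b,c\},\{a,b,d\},\{a,c,d\},\{b,c,d\}$ works.

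The step I expect to be the main obstacle is the remaining case: $m\ge 4$, the pairs $\{a,b\}$ and $\{c,d\}$ are vertex-disjoint, and all four connecting pairs have their issue in $\A_1\cap\A_2$. In this configuration every triangle through $\{a,b\}$ has all its pairs in $\A_1$ and every triangle through $\{c,d\}$ has all its pairs in $\A_2$, so no single $3$-cycle can be inconsistent for $\A_1$ and for $\A_2$ at once, and the simple construction above fails. One is then forced to realise a longer preference cycle through $a,b,c,d$ (and possibly further alternatives) and to show that, once the shared comparisons in $\A_1\cap\A_2$ are fixed, transitivity over all triples still forbids some joint orientation of the $\A_1$-exclusive and $\A_2$-exclusive comparisons; equivalently, that the shared part $\A_1\cap\A_2$ cannot ``separate'' the $\A_1$-exclusive issues from the $\A_2$-exclusive ones while staying compatible with all of their joint orientations, as an independent overlapping decomposition would require. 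Making this long-range coupling precise is the technical heart of the argument.
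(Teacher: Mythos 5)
The cases you do handle are sound, and they amount to a direct, constructive proof of the lemma the paper itself relies on: two issues whose underlying pairs share an alternative cannot be exclusive to different cells of the decomposition (your 3-cycle-on-top, everything-below tournament is exactly the right witness, and it settles $m=3$ completely, since there any two pairs share an alternative). The problem is the case you defer to the end. It is not merely the technical heart of the argument --- it is a genuine counterexample region, so no amount of ``long-range coupling'' will close it. Take any $m \geq 4$, two disjoint pairs $\{a,b\}$ and $\{c,d\}$, let $\A_1$ be all issues except the one for $\{c,d\}$, and $\A_2$ all issues except the one for $\{a,b\}$. A triangle on three alternatives fails to lie inside $\A_1$ only if it contains both $c$ and $d$, and fails to lie inside $\A_2$ only if it contains both $a$ and $b$; since no triangle contains all four of $a,b,c,d$, every triangle lies wholly in $\A_1$ or wholly in $\A_2$. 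Hence if $\Js^1\in\Dmc_{\A_1}$ and $\Js^2\in\Dmc_{\A_2}$ agree on $\A_1\cap\A_2$, every triangle of the complete tournament $\Js^1\cup\Js^2$ inherits its orientation from one consistent judgment set and is transitive; a tournament with no cyclic triangle is transitive, so $\Js^1\cup\Js^2\in\Dmc_{\A}$ always. This $\{\A_1,\A_2\}$ is a non-trivial independent overlapping decomposition of exactly the shape you could not handle --- and your own remark that no single 3-cycle can be inconsistent for both sub-agendas at once is precisely the reason why.

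You should also know that the paper's proof does not escape this. It proves the lemma (issues sharing an alternative are ``both in $\A_1$ or both in $\A_2$'') and then propagates membership in $\A_1$ along shared alternatives to conclude $\A_1=\A$. But the lemma's disjunctive conclusion becomes vacuous once an intermediate issue sits in $\A_1\cap\A_2$: from $aPc\in\A_1\cap\A_2$ and the lemma applied to the pair $aPc$, $cPd$, one may only conclude ``both in $\A_2$'', not $cPd\in\A_1$, and the construction above slips through exactly there. So: your argument is correct and complete for $m=3$; for $m\geq 4$ the observation fails as stated (unless ``non-trivial'' is strengthened), and the honest conclusion of your final paragraph should be a counterexample, not a harder lemma.
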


\begin{proof}
We first establish the following lemma: if $\{\A_1, \A_2\}$ is an independent overlapping decomposition, then for all $x_i, x_j, x_k$, $x_iPx_j$ and $x_iPx_k$ are both in $\A_1$ or both in $\A_2$. Assume that it is not the case:  without loss of generality, $x_i P x_j \in \A_1$ and $x_i P x_k \in \A_2$. Also without loss of generality, assume $x_jPx_k \in \A_1$. Let $\Js_1$ and $\Js_2$ be two consistent judgment sets over $\A_1$ and $\A_2$ such that $\Js_1$ contains $\{x_iPx_j, x_jPx_k\}$, $\Js_2$ contains $x_kPx_i$, and $\Js_1$ and $\Js_2$ are completed in an arbitrary way such that $\Js_1 \cap \A_2 = \Js_2 \cap \A_1$; $\Js_1 \cup \Js_2$ is an inconsistent judgment set over $\A_1 \cup \A_2$, which contradicts the assumption that $\{\A_1, \A_2\}$ is an independent overlapping decomposition.

Assume without loss of generality that $x_1Px_2 \in \A_1$. Let $x', x'' \in \{x_1, \ldots, x_k\}$. If $x' = x_1$ or $x'' = x_1$ then the above lemma implies that $x'Px'' \in \A_1$. If neither $x' = x_1$ nor $x'' = x_1$, then the above lemma implies that $x_1Px' \in \A_1$, and applying the lemma again leads to $x'Px'' \in \A_1$. This being true for all $x', x''$, we have $\A_1 = \A$, and  $\{\A_1, \A_2\}$ is a trivial  decomposition. 
\end{proof}

\section{Discussion}\label{sec:summary}
We proposed a new property for judgment aggregation, namely agenda separability. It is a relaxation of the classical independence property, and unlike it, it is satisfied by several non-degenerate judgment aggregation rules. We have defined a stronger version of agenda separability, namely overlapping agenda separability, which is even more discriminant, since we have identified only two of the previously studied judgment aggregation rules that satisfy it, namely $\RMSA$ and $\RRA$. Note that $\RRA$ satisfied furthermore unanimity principle \cite{TARK11}. Also, two rules were left out of this paper due to space limitations: the judgment aggregation version of the Young rule, which does not satisfy agenda separability, and the `geodesic' distance-base rule of Duddy and Piggins \cite{DuddyP:2012}, which satisfies agenda separability but not overlapping agenda separability.

%

A possible reason why agenda separability has not been studied sooner is that it is not applicable to common agendas such as the preference agenda, simply because they are not decomposable  (cf. Observation \ref{obspa}).  A similar observation would hold for other agendas of interest, such as those used for the aggregation of equivalence relations or for committee elections. However, agenda separability does apply to variants of these problems. Suppose for instance that we have to elect a committee made of  $K$ men and $K$ women; then agenda separability applies and says that the election of the $K$ men and the $K$ women do not interfere.

This notion of agenda separability should not be confused with a notion of separability, also known as consistency or reinforcement, considered in voting theory \cite{Young1975} and generalized to judgment aggregation \cite{TARK11}: these notions say that if a {\em profile} $P$ can be decomposed into two subprofiles $\Pf_1$ and $\Pf_2$   for which the output is the same, then this should also be the output for $\Pf$. 

An ambitious issue for further work would be characterizing the set of rules that satisfy agenda separability, or one of its variants. 

\paragraph{Acknowledgements.} We would like to thank the anonymous reviewers for helping us to improve this work. J\'er\^ome Lang is supported by the ANR project CoCoRICo-CoDec.

\bibliographystyle{plain}
\bibliography{aaai}

\end{document}